\newtheorem{thm}{Theorem}[section]
\newtheorem{lem}[thm]{Lemma}
\newtheorem{cor}[thm]{Corollary}
\theoremstyle{definition}
\newtheorem{definition}{Definition}
\theoremstyle{remark}
\newtheorem{rmk}[thm]{Remark}
\newtheorem{example}[thm]{Example}
\renewcommand{\P}{\mathbb{P}}
\newcommand{\R}{\mathbb{R}}
\renewcommand{\H}{\mathbb{H}}
\newcommand{\D}{\mathbb{D}}
\renewcommand{\DH}{\D\H}
\newcommand{\eps}{\varepsilon}
\newcommand{\qi}{\mathbf{i}}
\newcommand{\qj}{\mathbf{j}}
\newcommand{\qk}{\mathbf{k}}
\newcommand{\cj}[1]{{#1}^\ast}
\newcommand{\ej}[1]{{#1}_\eps}
\newcommand{\No}[1]{#1\cj{#1}}
\newcommand{\SE}[1][3]{\operatorname{SE}(#1)}
\newcommand{\bll}{\boldsymbol{\ell}}
\DeclareMathOperator{\Scal}{Scal}
\DeclareMathOperator{\Vect}{Vect}
\DeclareMathOperator{\bdeg}{bdeg}
\title{A Multi-Bennett 8R Mechanism Obtained From Factorization of Bivariate Motion Polynomials}
\author{
  Johanna Frischauf, Martin Pfurner,\\
  Daniel F. Scharler, Hans-Peter Schröcker\\[1ex]
  Department of Basic Sciences in Engineering Sciences\\
  University of Innsbruck, Austria
}
\begin{document}

\maketitle

\begin{abstract}
    We present a closed-loop 8R mechanism with two degrees of freedom whose motion
  exhibits curious properties. In any point of a two-dimensional component of
  its configuration variety it is possible to fix every second joint while
  retaining one degree of freedom. This shows that the even and the odd axes,
  respectively, always form a Bennett mechanism. In this mechanism, opposite
  distances and angles are equal and all offsets are zero. The 8R mechanism has
  four ``totally aligned'' configurations in which the common normals of any
  pair of consecutive axes coincide.
 \end{abstract}

\section{Introduction}
\label{sec:introduction}

Overconstrained linkages is a long-lasting but still highly
active topic of research in mechanism science. For several decades, researchers
focused on overconstrained mechanisms consisting of a single loop of $n \le 6$
revolute joints (R), prismatic joints (P), or, sometimes, helical joints (H).
New linkages of that type are continuously being discovered, often by craftily
combining known linkages \cite{wohlhart91, chen07, baker05}, sometimes via novel concepts
for their construction. One of these concepts is the
factorization of motion polynomials \cite{hegedus13}. It gave rise to the
construction of the only class of overconstrained 6R linkages with still unknown
relations between its Denavit-Hartenberg parameters. In
\cite{hegedus15,gallet17,li18,li20,liu21,liu21b}, motion polynomial
factorization was exploited for the synthesis of linkages.

In spite of some attempts, a complete classification of overconstrained
single-loop linkages is currently out of reach. It is thus natural that research
efforts shifted towards the investigation of single-loop linkages consisting of
$n \ge 7$ links with, generically, $n - 6 \ge 1$ degrees of freedom. (The
classification of single-loop linkages with $n \ge 7$ links and more than $n -
6$ degrees of freedom has recently been completed in \cite{guerreiro22}.) A
guiding principle for their construction is existence of ``interesting''
properties of the mechanism's motion or its configuration variety. One example
is \cite{kong15}, where 7R linkages whose configuration variety contains
irreducible components of different dimensions -- a property that has been named
\emph{kinematotropic} in \cite{wohlhart96} -- are constructed. \cite{pfurner14}
combines mobile 4R linkages (Bennett linkages) or RPRP linkages to loops of 7R/P
joints whose configuration variety is reducible. The motion of the original 4R/P
linkages is obtained by locking of joints in certain configurations.
Analogically \cite{pfurner18} restricted a specially designed single-loop 8R
mechanism to its possible sub-motions. \cite{liu21} and \cite{liu21b} pursue
similar aims but use motion polynomial factorization techniques. Joint locking
is also used in \cite{kong16} for restricting a mechanism to a certain
subvariety of its total configuration space, although for a class of parallel
mechanisms.

Our contribution in this article is of similar spirit as the works cited above
but also differs in several aspects. We present an 8R linkage with two degrees
of freedom that has the weird property that it retains one degree of freedom
when simultaneously locking every second joint in \emph{any} configuration of a
two-dimensional subvariety of its total configuration space. This unique
property immediately implies that the quadruples of ``even'' or ``odd'' axes
form respective Bennett mechanisms in any configuration because Bennett
mechanisms constitute the only class of mobile spatial closed-loop linkages with
four revolute axes \cite{karger94}. We therefore refer to this mechanism by the
name ``multi-Bennett 8R mechanism''. Our aim in this paper is to prove these
properties by an algebraic construction and use this to derive some geometric
and kinematic characteristics of the thus obtained mechanism.

While combination of Bennett linkages is a common technique in this area
\cite{goldberg43,waldron68,baker93,pfurner14,kong15}, our example seems to be
novel. It is not geometrically motivated -- at least in the current state of our
understanding -- but rather based on an algebraic construction. As suggested by
examples in \cite{lercher21}, there exist \emph{bivariate motion polynomials}
that admit, in a non-trivial way, two factorizations into products of linear
\emph{univariate} factors with alternating indeterminates. These two times four
factors give rise to the revolute axes of the 8R mechanism and describe their
relative motions. The underlying bivariate factorization theory is currently
being explored \cite{lercher22:_bi-degree,lercher21} and is considerably harder
than in the univariate case. This is witnessed by our proof of existence in
Theorem~\ref{thm:primalpart}.

In spite of its algebraic construction, the 8R linkage is subject to severe
geometric constraints. We demonstrate this by computing simple necessary
relations between its Denavit-Hartenberg parameters in
Theorem~\ref{th:DH-relations}. In Theorems~\ref{th:folded} and
\ref{th:folded-Bennett} we describe remarkable properties of several discrete
configurations. Our algebraic approach is efficient for proving existence of
multi-Bennett 8R mechanisms and some aspects of its geometry. A complete
geometric characterization, which exists for the vast majority of comparable
mechanisms, can not be obtained in this way and is probably rather difficult.

We feel that its numerous special properties (simple Denavit-Hartenberg
parameters, two degrees of freedom that are easy to control via low-degree
rational parametrization, existence of special configurations) make our
mechanism a promising candidate for yet to be explored applications.

We continue this text by recalling some basic facts about motion polynomials and
their relation to mechanism science in Section~\ref{sec:preliminaries}. In
Section~\ref{sec:bivariate} we provide a proof for existence of
quaternion polynomials with two non-trivial univariate factorizations. The proof
is constructive and provides a good method to directly compute the underlying 8R
linkage. Nonetheless, we found the procedure insufficient for obtaining results
that are suitable for further processing and in particular for the computation
of Denavit-Hartenberg parameters. Thus, our further investigation of the
multi-Bennett 8R mechanism in Section~\ref{sec:8R-Mechanism} is based on
carefully selected coordinate frames and configurations. This simplification
results in formulas that are tractable by means of computer algebra and,
ultimately, provides the desired necessary relations among the
Denavit-Hartenberg parameters (Theorem~\ref{th:DH-relations}).

This paper is a continuation of \cite{lercher22:_8R-mechanism}, a conference
paper which verifies most of the claims made in this article at hand of a
concrete numeric example. Strict mathematical proofs of the claimed facts are
presented herein for the first time.

\section{Preliminaries}
\label{sec:preliminaries}

Our construction of the multi-Bennett mechanism is based on certain
factorizations of bivariate quaternion polynomials. In this section we provide a
brief introduction to some fundamental concepts that will be used later in this
text and we settle our notation.

Denote by $\H$ the four-dimensional associative real algebra of (real) quaternions. It
is generated by basis elements $\qi$, $\qj$, and $\qk$ via the relations
\begin{equation*}
  \qi^2 = \qj^2 = \qk^2 = \qi\qj\qk = -1.
\end{equation*}
A quaternion $h \in \H$ can be written as $h = h_0 + h_1\qi + h_2\qj + h_3\qk$
with real numbers $h_0$, $h_1$, $h_2$, $h_3$. Extending real scalars to dual
numbers $a + b \eps$ with $a$, $b \in \R$ and $\eps^2 = 0$ yields the algebra
$\DH$ of dual quaternions
\begin{equation*}
  \DH = \{h = h_p + \eps h_d \mid h_p, h_d \in \H \}.
\end{equation*}

The quaternions $h_p$ and $h_d$ are called \emph{primal part} and \emph{dual part} of $h$. Any quaternion can be viewed as a dual quaternion with vanishing dual part. We will therefore sometimes use the same symbol $h$ for real ($h \in \H$) and dual ($h \in \DH$) quaternions.

The conjugate dual quaternion $\cj{h}$ is obtained by
replacing $\qi$, $\qj$, and $\qk$ with $-\qi$, $-\qj$, and $-\qk$, respectively,
the $\eps$-conjugate $\ej{h}$ of a dual quaternion is obtained by replacing
$\eps$ with~$-\eps$.

Given a dual quaternion $h = h_p+\eps h_d$, where $h_p = h_0 + h_1 \qi + h_2\qj + h_3\qk$ and $h_d=h_4 + h_5\qi + h_6\qj + h_7\qk$, the value $\Scal(h) \coloneqq \frac{1}{2}(h + \cj{h}) =
h_0 + \eps h_4$ is called the \emph{scalar part} and $\Vect(h)
\coloneqq \frac{1}{2}(h - \cj{h}) = h_1\qi + h_2\qj + h_3\qk + \eps(h_5\qi +
h_6\qj + h_7\qk)$ the \emph{vector part} of~$h$. The dual quaternion norm of $h$ is $h\cj{h}$. It is the dual number
\begin{equation}
  \label{eq:1}
  h\cj{h} = h_p\cj{h_p} + \eps(h_p\cj{h_d} + h_d\cj{h_p}) = h_0^2 + h_1^2 + h_2^2 + h_3^2 + 2\eps(h_0h_4 + h_1h_5 + h_2h_6 + h_3h_7) \in \D.
\end{equation}
Dual quaternions satisfying $h\cj{h} = 1$ are said to be \emph{normalized} or
\emph{unit.} In this case, the dual part in \eqref{eq:1} vanishes, that is
\begin{equation}
  \label{eq:2}
  h_0h_4 + h_1h_5 + h_2h_6 + h_3h_7 = 0.
\end{equation}
This is well-known under the name \emph{Study condition.}

The dual quaternion $h = h_p + \eps h_d$ is
invertible if and only if $h_p \neq 0$. In this case, we have
\begin{equation*}
    h^{-1} = h_p^{-1} (1 - \eps h_dh_p^{-1}), \quad \text{where} \quad h_p^{-1} = \frac{\cj{h_p}}{h_p\cj{h_p}}.
\end{equation*}
If $h$ is unit, then $h^{-1} = \cj{h}$.

The multiplicative sub-group $\DH^\times \coloneqq \{h \in \DH \mid h\cj{h} \in
\R \setminus \{0\}\}$ modulo the real multiplicative group $\R^\times$ is
isomorphic to $\SE$, the group of rigid body displacements. Using homogeneous
coordinates in the projective space $\P^3(\R) = \P(\langle 1, \eps\qi, \eps\qj,
\eps\qk \rangle)$, the action of $h \in \DH^\times$ on $x = x_0 + \eps(x_1\qi +
x_2\qj + x_3\qk) \in \P^3(\R)$ is given by
\begin{equation}
  \label{eq:3}
  x \mapsto \ej{h} x \cj{h}.
\end{equation}

\subsection{Dual Quaternions and Line Geometry}
\label{sec:line-geometry}

In this paper, rotations around a fixed axis but with variable rotation angle
will play an important role. We therefore have a closer look at the
representation of straight lines (revolute axes) and rotations within the
framework of dual quaternions. Identifying the oriented revolute axis with
normalized Plücker coordinates $(p_1,p_2,p_3,q_1,q_2,q_3)$ in the sense of
\cite[Section~2]{pottmann10} with the unit dual quaternion $r = p_1\qi + p_2\qj
+ p_3\qk + \eps(q_1\qi + q_2\qj + q_3\qk)$, the rotation with angle $\varphi$
around $r$ is given by the unit dual quaternion
\begin{equation}
  \label{eq:4}
  h \coloneqq \cos(\tfrac{\varphi}{2}) + \sin(\tfrac{\varphi}{2})r,\quad \varphi \in [0, 2\pi),
\end{equation}
or, because we use homogeneous coordinates, by any of its non-zero real
multiples. Note that the dual quaternion $h$ of \eqref{eq:4} satisfies the Study
condition \eqref{eq:2} because $r$ satisfies the Plücker condition $r\cj{r} \in
\R \setminus \{0\}$.

The action \eqref{eq:3} on points can be used to transform straight lines by
transforming points on them. Points on a straight line given by its Plücker
coordinates can be found, for example, by \cite[Equation~(2.4)]{pottmann10}. A
straightforward calculation also provides us with a direct formula for
displacing a straight line $\ell$ whose Plücker coordinates are given as
vectorial dual quaternions:
\begin{equation}
  \label{eq:5}
  \ell \mapsto \ej{(h \ell \cj{h})}.
\end{equation}

\subsection{Dual Quaternion Polynomials}
\label{sec:polynomials}

The representation \eqref{eq:4} of a rotation around an oriented general axis is
only unique up to multiplication with a real scalar. Assuming, for the time
being, $\varphi \neq 0$, we can divide \eqref{eq:4} by $\sin\frac{\varphi}{2}$,
substitute $\cot\frac{\varphi}{2}$ with $-t$ and multiply the result with $-1$
to see that the linear dual quaternion polynomial
\begin{equation}
  \label{eq:6}
  C = t - r,\quad t \in \R
\end{equation}
parametrizes all rotations with non-vanishing rotation angle around $r$ as well.
In order to also account for $\varphi = 0$, we should extend the parameter range
in \eqref{eq:6} to $\R \cup \{\infty\}$. With the natural understanding that
$C(\infty) \coloneqq \lim_{t \to \infty}\frac{1}{t}C(t) = 1$, the parameter
value $t = \infty$ indeed corresponds to the rotation angle $\varphi = 0$, that is,
the identity transformation.

We would like to emphasize that the dual quaternion in \eqref{eq:4} represents a
rotation with a fixed rotation axis $r$ and rotation angle $\varphi$. The
polynomial in \eqref{eq:6} parametrizes all rotations with fixed rotation axis
$r$. The rotation angle is dependent on the parameter $t$ (we have
$\varphi=2\operatorname{arccot}(-t)$).

More generally, we can consider arbitrary polynomials $C = \sum_{i=0}^d c_it^i$
with coefficients $c_i \in \DH$. Since the indeterminate $t$ typically serves as
a \emph{real} parameter in our context, multiplication, conjugation and
evaluation at real values of polynomials are defined by the conventions that $t$
commutes with all coefficients and $\cj{t} = t$. The thus obtained ring of
polynomials is denoted by $\DH[t]$. Similarly, we can also consider the ring of
bivariate dual quaternion polynomials $\DH[s,t]$ in $s$ and $t$. Its
multiplication, conjugation and evaluation at real values is defined by similar
conventions and the assumption that $s$ and $t$ commute with all coefficients
and with each other.

The linear polynomial $t-r$ from Equation \eqref{eq:6} satisfies $\No{(t-r)} \in
\R[t]$ and also $\No{(t-r)}\neq 0$ (note that $r$ satifies the Plücker
condition). A generalization of this property leads to

\begin{definition}
  \label{def:motion-polynomial}
  A polynomial $C \in \DH[t]$ or in $\DH[s,t]$ is called a \emph{motion
    polynomial} if $C\cj{C} \in \R[t]$ or in $\R[s,t]$, respectively, and
  $C\cj{C} \neq 0$.
\end{definition}

The name ``motion polynomial'' is justified by the observation that the action
\eqref{eq:3} on points allows the parametric version
\begin{equation}
  \label{eq:7}
  x \mapsto y(s,t) = \ej{C}(s,t) x \cj{C}(s,t),\quad s,t \in (\R \cup \{ \infty \}) \times (\R \cup \{ \infty \}).
\end{equation}
Equation~\eqref{eq:7} is a polynomial map in homogeneous coordinates. The
Cartesian coordinates of $y(s,t)$ are rational functions so that \eqref{eq:7}
describes a rigid body motion with rational surfaces as trajectories.

Univariate motion polynomials have been originally defined in \cite{hegedus13}.
There, it was implicitly assumed that motion polynomials are monic. We rather
replace this assumption by the condition $C\cj{C} \neq 0$ which, together with a
proper evaluation at $s = \infty$ or $t = \infty$ and the possibility of
rational re-parametrizations, suffices for our purpose.

\begin{definition}
  \label{def:infinity}
  The value of the motion polynomial $C \in \DH[t]$ at $t = \infty$ is defined
  as $C(\infty) \coloneqq \lim_{t \to \infty}t^{-\deg C}C(t)$. It is the leading
  coefficient of~$C$. The value of $C \in \DH[s,t]$ at $(s,t)$ where $s =
  \infty$ or (not exclusively) $t = \infty$ is defined by similar limits. It is
  the leading coefficient in $s$ or $t$ (or in both), respectively.
\end{definition}

Re-parametrizations that preserve polynomiality and degree of univariate motion
polynomials are maps of the form
\begin{equation}
  \label{eq:8}
  \tau \mapsto \frac{a\tau+b}{c\tau+d},\quad
  a,b,c,d \in \R,\quad
  ad - bc \neq 0,
\end{equation}
combined with multiplying away denominators. With $C = \sum_{i=0}^d c_i t^i$ we have
\begin{equation*}
  C(\tau) = \sum_{i=0}^d c_i (a\tau+b)^i(c\tau+d)^{d-i}.
\end{equation*}
It is noteworthy that \eqref{eq:8} naturally is a map from $\R \cup \{\infty\}$
to $\R \cup \{\infty\}$. Assuming $c \neq 0$, we have
\begin{equation*}
  \infty \mapsto \frac{a}{c}
  \quad\text{and}\quad
  -\frac{d}{c} \mapsto \infty.
\end{equation*}
If $c = 0$, then $\infty$ is a fix point of \eqref{eq:8}. Re-parametrizations of
type \eqref{eq:8} do not change the property of being a motion polynomial.

An extension of \eqref{eq:8} to bivariate polynomials is straightforward. Let us
illustrate some definitions and concepts so far for linear motion polynomials,
which constitute an important special example.

\begin{example}
  \label{ex:linear-motion-polynomial}
  The linear polynomial $C = t - h$ with $h = h_p+\eps h_d$, where $h_p = h_0 + h_1\qi + h_2\qj + h_3\qk$ and $h_d=h_4 + h_5\qi + h_6\qj + h_7\qk$, is a motion polynomial by
  Definition~\ref{def:motion-polynomial} if
  \begin{equation*}
    \No{C} = \No{(t-h)} = t^2 - (h + \cj{h})t + \No{h}
  \end{equation*}
  is a real polynomial. This is equivalent to $h + \cj{h} = 2 (h_0 + \eps h_4)$ and $h\cj{h}
  = h_0^2 + h_1^2 + h_2^2 + h_3^2 + 2\eps(h_0h_4 + h_1h_5 + h_2h_6 + h_3h_7)$
  both being real whence $h_d+\cj{h_d}=0$ and $h_p\cj{h_d}+h_d\cj{h_p}=0$ or, equivalently,
  \begin{equation}
    \label{eq:9}
    h_4 = 0
    \quad\text{and}\quad
    h_1h_5 + h_2h_6 + h_3h_7 = 0.
  \end{equation}
  In this case the motion polynomial $C$ describes a rotation around the
  straight line with Plücker coordinates $\Vect(h) = h_1\qi + h_2\qj + h_3\qk +
  \eps(h_5\qi + h_6\qj + h_7\qk)$, a fact which should not be surprising. We
  already demonstrated the relation between linear motion polynomials and
  rotations. The second equation in \eqref{eq:9} is just the Plücker condition.
  Note that the rotation angle depends on both, $t$ and $h$. By
  Definition~\ref{def:infinity}, the value $C(\infty)$ equals
  $\lim_{t\to\infty}t^{-1}C(t) = 1$ which is the identity displacement.
  Obviously, $C(0) = -h$. The re-parametrization $\tau \mapsto \frac{1}{\tau}$
  is of type \eqref{eq:8} with $a = d = 0$ and $b = c = 1$. It interchanges $0$
  and $\infty$. Indeed, $C(\tau) = 1 - \tau h$ and
  \begin{equation*}
    C(\tau)|_{\tau = 0} = 1,
    \quad
    C(\tau)|_{\tau = \infty} = \lim_{\tau\to\infty} \tau^{-1}C(\tau) = -h,
  \end{equation*}
  as expected.
\end{example}

In the next section we will study bivariate motion polynomials which can be
written as products of linear motion polynomials.

\section{Alternating Factorizations of Bivariate Quaternion Polynomials}
\label{sec:bivariate}

Given a bivariate dual quaternion polynomial $C \in \DH[s,t]$, we denote its
bi-degree by $\bdeg(C)$. We wish to find a motion polynomial $C \in \DH[s,t]$
with $\bdeg(C)=(2,2)$ that admits two different factorizations with alternating
univariate linear factors, i.e.,
\begin{equation}
  \label{eq:10}
  C = (t-h)(s-\ell)(t-m)(s-n)=(s-n')(t-m')(s-\ell')(t-h'),
\end{equation}
where $h$, $\ell$, $m$, $n$, $n'$, $m'$, $\ell'$, $h' \in \DH \setminus \{0\}$
and $\No{(t-r)}=\No{(t-r')}\in \R[t]$, $\No{(s-u)}=\No{(s-u')}\in \R[s]$ for $r
\in \{h, m\}$ and $u \in \{\ell, n\}$, a requirement that is seen to be
necessary by taking norms on both sides of \eqref{eq:10}. We call these
factorizations \emph{alternating} since the $s$- and $t$-factors appear in
alternating order. By the considerations in Section~\ref{sec:preliminaries}, the
linear factors will represent rotations around fixed axes.

Motion polynomials of shape \eqref{eq:10} immediately lead to closed-loop 8R
mechanisms with the properties mentioned in the introduction:
\begin{itemize}
\item Each factorization gives rise to a two-parametric motion of an open 4R
  chain. Since the factorizations agree, the two distal links can be rigidly
  connected to form a closed-loop 8R linkage with the same two degrees of
  freedom.
\item In this two-dimensional motion component (we conjecture that other
  components exist as well), the motion of any axis is determined by either $s$
  or $t$. Locking one axis, that is, fixing $s$ or $t$, automatically locks
  every second axis while the axes parametrized by the other parameter still
  move. In terms of linear motion polynomials, the expression ``locking an
  axis'' can be read as follows: Each linear polynomial in \eqref{eq:10}
  represents a rotation around a fixed axis (c.~f. Example
  \ref{ex:linear-motion-polynomial}). As outlined in the previous section, the
  rotation angle is dependent on the dual quaternion \emph{as well as} on $t$.
  We may now choose a fixed real number $t_0 \in \R$ and consider the
  expressions $t_0-h$, $t_0-m$, $t_0-m'$ and $t_0-h'$. All of them represent
  rotations with a fixed rotation angle and we refer to the respective rotation
  axes as ``locked axes''. By locking all axes parametrized by $t$, we obtain a
  ``sub-mechanism'' with four moving axes (the axes parametrized by $s$) which,
  by a naive counting of degrees of freedom, should be rigid. However, the two
  factorizations in Equation \eqref{eq:10} guarantee that the sub-mechanism
  still moves. Such a closed-loop mobile $4R$ mechanism is called a
  \emph{Bennett linkage} and we use the term ``$s$-Bennett linkage''.
  Interchanging $s$ and $t$ leads to another sub-mechanism, the ``$t$-Bennett
  linkage''.
\end{itemize}

Up to now, only isolated examples of this kind of polynomials have been known
(c.~f. \cite{lercher21,lercher22:_8R-mechanism}). We will present a systematic
construction of these polynomials, and thus of multi-Bennett 8R mechanisms, and
start with a simple yet crucial lemma:

\begin{lem}
  \label{lem:bennett}
  Let $C \in \DH[s,t]$ be a dual quaternion polynomial that admits two
  alternating factorizations:
  \begin{equation}
    \label{eq:11}
    C=(t-h)(s-\ell)(t-m)(s-n)=(s-n')(t-m')(s-\ell')(t-h')
  \end{equation}
  Then
  \begin{equation*}
    (s-\ell)(s-n)=(s-n')(s-\ell') \quad \text{and} \quad (t-h)(t-m)=(t-m')(t-h').
  \end{equation*}
\end{lem}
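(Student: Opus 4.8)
The plan is to exploit the bivariate structure of the product by substituting convenient values of $t$ (and, dually, $s$) to collapse one family of factors. The left-hand side of \eqref{eq:11} reads $C = (t-h)(s-\ell)(t-m)(s-n)$ and the right-hand side reads $C = (s-n')(t-m')(s-\ell')(t-h')$. Both sides are polynomials in $t$ of degree $2$ (the two $t$-factors on each side) and in $s$ of degree $2$. First I would compare leading coefficients in $t$: the coefficient of $t^2$ on the left is $(s-\ell)(s-n)$, while on the right it is $(s-n')(s-\ell')$. This already yields the first claimed identity $(s-\ell)(s-n) = (s-n')(s-\ell')$ with essentially no work, since taking the leading coefficient in $t$ commutes with the rest of the multiplication (the remaining $t$-factors are monic in $t$ with scalar leading term $1$). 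Dually, comparing leading coefficients in $s$ gives $(t-h)(t-m) = (t-m')(t-h')$, which is the second identity.

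The one point that requires care is the bookkeeping of which side contributes which leading coefficient, because the factors do \emph{not} all commute: $h$, $\ell$, $m$, $n$ are dual quaternions. So I would write $C$ explicitly as a polynomial in $t$ with coefficients in $\DH[s]$. From $C = (t-h)(s-\ell)(t-m)(s-n)$ one gets, grouping by powers of $t$,
\begin{equation*}
  C = (s-\ell)(s-n)\,t^2 - \bigl(h(s-\ell)(s-n) + (s-\ell)m(s-n)\bigr)\,t + h(s-\ell)m(s-n),
\end{equation*}
using that $t$ is central. Doing the same for the right-hand side,
\begin{equation*}
  C = (s-n')(s-\ell')\,t^2 - \bigl((s-n')m'(s-\ell') + (s-n')(s-\ell')h'\bigr)\,t + (s-n')m'(s-\ell')h'.
\end{equation*}
Equating the $t^2$-coefficients gives $(s-\ell)(s-n) = (s-n')(s-\ell')$ immediately. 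For the second identity I would repeat the computation but now regard $C$ as a polynomial in $s$ with coefficients in $\DH[t]$: the $s^2$-coefficient from the left factorization is $(t-h)(t-m)$ and from the right factorization is $(t-m')(t-h')$ (here one must be a little careful, since on the right the $s$-factors $s-n'$ and $s-\ell'$ sit on the outside and inside, so the $s^2$ term is $(t-m')(t-h')$ read off by taking the product of the leading terms of the two $s$-factors, each of which is the scalar $1$, and multiplying the sandwiched $t$-factors in their given order). Hence $(t-h)(t-m) = (t-m')(t-h')$.

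The main (and only real) obstacle is precisely this noncommutative ordering: one must verify that ``extract the leading coefficient in $t$'' genuinely produces $(s-\ell)(s-n)$ and not some scrambled product, i.e.\ that the two $s$-factors keep their original left-to-right order when the $t$-factors are stripped off. This is true because each $t-r$ (for $r \in \{h,m\}$, and likewise $r' \in \{h',m'\}$) has leading coefficient $1 \in \R$, which is central, so passing to the leading term in $t$ simply deletes those factors in place without disturbing the relative position of the surviving $s$-factors. Once this is observed, no further computation is needed and the lemma follows. I would close by remarking that these two identities express exactly the fact that the even and odd quadruples of axes each satisfy a closed four-bar (Bennett) relation, which motivates the name of the mechanism.
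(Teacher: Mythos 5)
Your proposal is correct and follows exactly the paper's argument: view $C$ as a polynomial in $t$ with coefficients in $\DH[s]$, compare the coefficients of $t^2$ to get the first identity, and interchange the roles of $s$ and $t$ for the second. The explicit expansion you give to justify the non-commutative bookkeeping is a valid (and slightly more detailed) version of the same step.
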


\begin{proof}
  We may view $C$ as a polynomial in $t$ with coefficients in the ring $\DH[s]$.
  Comparing the coefficient of $t^2$ on the left-hand and the right-hand side of
  Equation~\eqref{eq:11} shows that $(s-\ell)(s-n)=(s-n')(s-\ell')$. The second
  statement follows by interchanging the roles of $s$ and~$t$.
\end{proof}

\begin{rmk}
  \label{rmk:bennett}
  If the polynomial $C = (t-h)(s-\ell)(t-m)(s-n)$ admits a second alternating
  factorization as in \eqref{eq:11}, it can always be computed by so-called
  \emph{Bennett flips} \cite[Definition~4]{li18}. The name is motivated by the
  observation that the revolute axes to $\ell$, $n$, $\ell'$, and $n'$ (and also to $h$, $m$, $h'$ and $m'$) form, in
  that order, a Bennett linkage. More precisely, the quaternions $n'$, $\ell'$,
  $m'$, $h' \in \DH$ can be computed by replacing the univariate polynomials
  $(s-\ell)(s-n)$ and $(t-h)(t-m)$ by their second factorization with linear
  factors. In \cite[Definition~4]{li18}, it is shown that the second
  factorization of a univariate polynomial $(u-h_1)(u-h_2) \in \DH[u]$ is
  obtained via the formulas
  \begin{equation}
    \label{eq:12}
    k_2=-(\cj{h_1}-h_{2})^{-1}(h_1h_2-\No{h_1}) \quad \text{and} \quad k_1=h_1+h_2-k_2,
  \end{equation}
  where $(u-h_1)(u-h_2)=(u-k_1)(u-k_2)$ and $\No{(u-h_1)}=\No{(u-k_2)}$,
  $\No{(u-h_2)}=\No{(u-k_1)}$.
\end{rmk}

In the remainder of this section, we will provide a systematic procedure for the
construction of motion polynomials with two alternating factorizations. We would
like to emphasize that we are not aware of \emph{any} factorization results for
\emph{bivariate} dual quaternion polynomials in existing literature. Our
construction is based on the following idea: In
Section~\ref{subsec:real-quaternions}, we construct bivariate real quaternion
polynomials with two alternating factorizations. In
Section~\ref{subsec:dual-quaternions}, we extend our results to dual quaternion
polynomials.

\subsection{Quaternion polynomials with two alternating factorizations}
\label{subsec:real-quaternions}

The following theorem is the centerpiece of the present section. It presents a
method that can be used to construct quaternion polynomials of bi-degree $(2,2)$
that admit two different factorizations with linear factors.

\begin{thm}
  \label{thm:primalpart}
  Let $h$, $m$, $n \in \H$ be quaternions. Moreover, assume that either
  $\Scal(h) \neq \Scal(m)$ or, if $\Scal(h) = \Scal(m)$, that $\No{h}\neq
  \No{m}$. Then there exists a suitable quaternion $\ell \in \H$ such that the
  polynomial
  \begin{equation}
    \label{eq:13}
    C \coloneqq (t-h)(s-\ell)(t-m)(s-n) \in \H[s,t]
  \end{equation}
  admits a second factorization with univariate linear factors.
\end{thm}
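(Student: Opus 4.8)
My plan is to read the theorem as the assertion that the given alternating factorization of $C$ can be ``turned around'' into one of the opposite alternating type, and to reduce the existence of the unknown factor $\ell$ to a finite system of quaternion equations. As a preliminary remark, note that since everything here lives in $\H$ (not $\DH$), for any $a \in \H$ the linear polynomial $t-a$ automatically satisfies $\No{(t-a)} = t^2 - 2\Scal(a)\,t + \No a \in \R[t]$; hence the norm conditions accompanying \eqref{eq:10} hold for free, and only the existence of a second factorization into linear factors must be established.

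The first real step is to pin down the outer factors. By Lemma~\ref{lem:bennett}, a second alternating factorization $C = (s-n')(t-m')(s-\ell')(t-h')$ is forced to satisfy $(t-h)(t-m) = (t-m')(t-h')$ and $(s-\ell)(s-n) = (s-n')(s-\ell')$. In particular $(m',h')$ must be the pair produced by the Bennett-flip formula \eqref{eq:12} applied with $h_1 = h$, $h_2 = m$; this is well-defined because the hypothesis guarantees $\cj h \neq m$ --- indeed $\cj h = m$ would force both $\Scal h = \Scal m$ and $\No h = \No m$. I would fix this $(m',h')$ once and for all. Likewise, as soon as $\ell$ is chosen (outside the exceptional locus $\cj\ell = n$), the pair $(n',\ell')$ is the Bennett flip of $(\ell,n)$, an explicit quaternion function of $\ell$ and $n$.

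It then remains to choose $\ell$ so that the identity $C = (s-n')(t-m')(s-\ell')(t-h')$ actually holds. Viewing both sides as quadratics in $t$ with coefficients in $\H[s]$, the $t^2$-coefficients agree by the choice of the $s$-flip and the $s^2$-coefficients agree by the choice of the $t$-flip; what survives is the matching of $C(s,0) = h(s-\ell)m(s-n)$ with $(s-n')m'(s-\ell')h'$, the matching of $C(0,t) = (t-h)\ell(t-m)n$ with $n'(t-m')\ell'(t-h')$, and the matching of the single coefficient of $s^1 t^1$. This is a system of quaternion equations in the unknown $\ell$ (with $\ell',n'$ eliminated via the flip formula) that is massively over-determined on the face of it, so its solvability for \emph{every} admissible $(h,m,n)$ is the whole point.

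The core of the argument --- and the step I expect to be hardest --- is to produce a closed-form quaternion $\ell$ solving this system and then to verify all the identities by a direct, if lengthy, computation with non-commuting products; the dichotomy in the hypothesis ($\Scal h \neq \Scal m$ versus $\Scal h = \Scal m$ with $\No h \neq \No m$) is what one uses here, as it governs which quantity may legitimately be inverted in the formula for $\ell$. In contrast to the univariate theory, no counting or degree argument lets one sidestep exhibiting the intertwining factor explicitly; this is the ``considerably harder'' phenomenon the introduction alludes to. As a by-product, the explicit $\ell$ yields the promised direct construction of the underlying 8R linkage.
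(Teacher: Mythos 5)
Your write-up stops exactly where the theorem's content begins. The entire assertion is the \emph{existence} of $\ell$, and your proposal defers that to ``produce a closed-form quaternion $\ell$ solving this system and then verify all the identities'' without indicating how. Worse, the system you set up is not one a soft argument can dispose of: after eliminating $(n',\ell')$ via the Bennett flip of $(\ell,n)$, formula \eqref{eq:12} makes $n'$ and $\ell'$ \emph{rational, nonlinear} functions of $\ell$ (they involve $(\cj{\ell}-n)^{-1}(\ell n-\No{\ell})$), so matching $C(s,0)$, $C(0,t)$ and the $st$-coefficient is a heavily over-determined nonlinear system in the four real coordinates of $\ell$. Nothing in your outline explains why it is solvable, and your remark that the dichotomy in the hypothesis ``governs which quantity may legitimately be inverted in the formula for $\ell$'' presupposes a formula you never derive. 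The paper avoids all of this by demanding much less: it only forces $C$ to acquire a single right factor $t-h'$ with $\No{(t-h')}=\No{(t-h)}$. Dividing $C$ by $M=\No{(t-h)}$ with remainder and showing the remainder has the form $R=(t-h)(r_1s+r_0)(s-n)$ with $r_1=\cj{h}-m$ reduces the existence of that right factor to the single Sylvester-type equation $\ell(r_1+h)-h\ell=r_1\cj{n}$, which is \emph{linear} in $\ell$ and, by the Janovsk\'a--Opfer criterion, uniquely solvable precisely under the stated dichotomy on $\Scal$ and $\No{\cdot}$. The cofactor $A$ of bidegree $(1,2)$ is then split into univariate linear factors by the known result of Skopenkov--Krasauskas and Lercher--Schr\"ocker, since $\No{A}$ is a product of real univariate polynomials. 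That two-stage reduction is the missing idea.

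A second, smaller point: you are trying to prove more than the theorem claims. The statement asserts only \emph{some} second factorization into univariate linear factors; the alternating shape $(s-n')(t-m')(s-\ell')(t-h')$ is established only in Corollary~\ref{cor:alt} and requires the extra hypotheses $mn\neq nm$ and $h'n\neq nh'$. Under the theorem's hypotheses alone the second factorization can fall into one of the non-alternating orderings, so insisting from the outset on the fully alternating identity risks making your system genuinely unsolvable in those cases. Your preliminary observations --- that the norm conditions are automatic over $\H$, and that $\cj{h}\neq m$ so the $t$-flip is well defined --- are correct but peripheral.
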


\begin{proof}
  We briefly explain the main idea of the proof: According to \eqref{eq:13}, the
  polynomial $C$ has a left factor of the form $t-h$. By choosing the quaternion
  $\ell$ in a special way, we force the polynomial $C$ to admit another
  factorization with a right factor $t-h'$ of the same norm, that is
  \begin{equation}
    \label{eq:14}
    (t-h)(s-\ell)(t-m)(s-n)=A(t-h') \quad \text{with $A \in \H[s,t]$, $\bdeg(A)=(1,2)$,}
  \end{equation}
  and $\No{(t-h)}=\No{(t-h')}$.
  In \cite{skopenkov19,lercher21}, it is shown that polynomials of degree one in
  $t$ admit factorizations with univariate linear factors as long as the
  corresponding norm polynomial splits into a product of real univariate
  polynomials.\footnote{The original reference is \cite[Lemma~2.9]{skopenkov19},
    but in \cite[p.~9]{lercher21} we provide an algorithm that can be used to
    compute a factorization of the desired form.} This is indeed the case for
  the polynomial $A$ in \eqref{eq:14} since $\No{A}=PR$ with $P=\No{(t-m)}\in
  \R[t]$ and $R=\No{(s-\ell)}\No{(s-n)} \in \R[s]$. Therefore,
  \begin{equation}
    \label{eq:15}
    A=(u_1-h_1)(u_2-h_2)(u_3-h_3) \quad \text{with} \quad u_i \in \{s,t\} \text{ and } h_i \in \H \text{ for } i=1,2,3
  \end{equation}
and $C$ admits a second factorization with univariate linear factors. All possible combinations of linear $s$- and $t$-factors will be discussed in the proof of Corollary~\ref{cor:alt}.

  In order to show \eqref{eq:14}, we define $M \coloneqq \No{(t-h)} \in \R[t]$, view $C$ and $M$ as univariate polynomials with coefficients in $\H[s]$ and apply division with remainder of $C$ by~$M$:
  \begin{equation}
    \label{eq:16}
    C=(t-h)(s-\ell)(t-m)(s-n)=TM+R,
  \end{equation}
  where $T$, $R \in \H[s,t]$ and $\bdeg(R)=(d_t,d_s)$ with $d_t \le 1$ and $d_s
  \le 2$. We compare the coefficients of $t^2$ on the left-hand and right-hand side of
  equation \eqref{eq:16} and conclude $T=(s-\ell)(s-n)$ since $R$ is at most
  linear in $t$. Therefore, the linear factor $s-n$ is a right factor of both
  $C$ and $T$. Representation \eqref{eq:16} then shows that it is also a right
  factor of $R$ (we used the fact $TM=MT$ since the polynomial $M \in \R[t]$ is
  real and commutes with other polynomials). Similarly, the linear factor $t-h$
  is a left factor of both $C$ and $M=(t-h)\cj{(t-h)}$ and hence also a left
  factor of $R$. We can write $R=(t-h)R'=R't-hR' \in \H[s,t]$ with $R' \in
  \H[s]$. Since $s-n$ divides $R$ from the right it needs to divide each
  coefficient of $R$ when viewed as polynomial in $t$ with coefficients in
  $\H[s]$. We conclude that $s-n$ is a right factor of $R'$. Hence $R$ is
  necessarily of the form
  \begin{equation}
    \label{eq:17}
    R=(t-h)(r_1s+r_0)(s-n)
  \end{equation}
  with $r_1, r_0 \in \H$. The quaternions $r_1$ and $r_0$ are obtained by
  comparing appropriate coefficients in \eqref{eq:16}: Up to now, we always considered $C$ and $R$ as univariate polynomials with coefficients in $\H[s]$. We now view them as bivariate polynomials, which allows us to compare the coefficients of $ts^2$ and $t$: Comparing coefficients of
  $ts^2$ in \eqref{eq:16} yields
  \begin{equation*}
    -h-m=-h-\cj{h}+r_1
  \end{equation*}
  and hence $r_1=\cj{h}-m$. Comparing coefficients of $t$ leads to
  \begin{equation*}
    -\ell m n- h\ell n = -\ell nh-\ell n\cj{h}-r_0n
  \end{equation*}
  and hence
  \begin{equation*}
    r_0=[(\ell m + h\ell)n-\ell \underbrace{n(h+\cj{h})}_{\overset{(*)}{=}(h+\cj{h})n}]n^{-1}=\ell m + h\ell -\ell(h+\cj{h})=h\ell-\ell(r_1+h).
  \end{equation*}
  In $(*)$ we used the fact $h+\cj{h} \in \R$. Let us recall the main idea of
  the proof: We need to force $C$ to admit a factorization with the right factor
  $t-h'$, where $h'$ is yet to be determined. Alternatively, we can force $R$ to have the right factor $t-h'$. By
  \eqref{eq:16}, it is then also a right factor of $C$ (note that we require
  $M=\cj{(t-h')}(t-h')$). We write
  \begin{equation*}
    R=r_1(t-r_1^{-1}hr_1)(s+r_1^{-1}r_0)(s-n)=r_1(t-h')(s+r_1^{-1}r_0)(s-n),
  \end{equation*}
  where $h' \coloneqq r_1^{-1}hr_1$. The polynomial $t-h'$ indeed satisfies the required condition $\No{(t-h')}=M$. If the polynomial $S \coloneqq
  (s+r_1^{-1}r_0)(s-n) \in \H[s]$ was a real polynomial, the factor $t-h'$ would
  commute with $S$ and hence be a right factor of $R$. In case $-r_{1}^{-1}r_0 =
  \cj{n}$ we obtain $S = (s-\cj{n})(s-n) \in \R[s]$. Therefore, we need to find
  a quaternion $\ell \in \H$ such that $-r_0=r_1\cj{n}$, that is
  \begin{equation*}
    \ell(r_1+h)-h\ell = r_1\cj{n}.
  \end{equation*}
  Above equation is a \emph{linear} equation in the quaternion unknown $\ell \in
  \H$. By \cite[Theorem 2.3]{janovska08} it is uniquely solvable if and only if
  $\Scal(A) \neq -\Scal(B)$ or $\Scal(A) = -\Scal(B)$ and $\No{A} \neq \No{B}$,
  where $A \coloneqq -h$ and $B \coloneqq h+r_1$. This is equivalent to our
  theorem's assumption $\Scal(h) \neq \Scal(m)$ or $\Scal(h) = \Scal(m)$ and
  $\No{h}\neq \No{m}$. In the referenced Theorem~2.3 of \cite{janovska08}, an
  explicit formula for the solution $\ell \in \H$ is provided:
  \begin{equation*}
    \ell = (2(2\Scal(h)-\Scal(m))-h-(h+r_1)\cj{(h+r_1)}h^{-1})^{-1}(r_1\cj{n}-h^{-1}r_1\cj{n}\cj{(h+r_1)}).
  \end{equation*}
  This proves the claim.
\end{proof}

In order to construct mechanisms, we need to guarantee that the second
factorization in Theorem \ref{thm:primalpart} is alternating as well. This is
ensured by some additional assumptions stated in the (rather technical)
Corollary \ref{cor:alt}.

\begin{cor}
  \label{cor:alt}
  Let $h$, $m$, $n \in \H$ be quaternions and $h'$, $m' \in \H$ be such that
  $(t-h)(t-m)=(t-m')(t-h')$ (c.~f. Remark \ref{rmk:bennett}). If $mn\neq nm$ and
  $h'n\neq nh'$, the second factorization in Theorem \ref{thm:primalpart} is
  alternating as well, that is
  \begin{equation*}
    C = (t-h)(s-\ell)(t-m)(s-n) = (s-n')(t-m')(s-\ell')(t-h'),
  \end{equation*}
  where $\No{(t-r)}=\No{(t-r')}$, $\No{(s-u)}=\No{(s-u')}$ for $r \in \{h, m\}$
  and $u \in \{\ell, n\}$.
\end{cor}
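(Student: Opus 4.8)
The plan is to start from the second factorization of $C$ produced in the proof of Theorem~\ref{thm:primalpart}. Recall that there we obtained
\begin{equation*}
  C = r_1(t-h')(s-\cj n)(s-n), \qquad h' = r_1^{-1} h r_1, \quad r_1 = \cj h - m,
\end{equation*}
where the constant left factor $r_1$ is irrelevant up to scaling. So a priori the ``second factorization'' is of shape $(t-h')(s-\cj n)(s-n)$, i.e.\ one $t$-factor followed by two $s$-factors --- \emph{not} alternating. The whole point of the corollary is to re-order these three linear factors into alternating shape. First I would use Lemma~\ref{lem:bennett}: since $C$ also equals $(t-h)(s-\ell)(t-m)(s-n)$, comparing top $t$-coefficients gives $(s-\ell)(s-n) = (s-n')(s-\ell')$ for the unique (by Remark~\ref{rmk:bennett}, using $mn \neq nm$ so $\ell, n$ don't commute, hence the Bennett flip is non-trivial and well-defined) second factorization of the quadratic $s$-polynomial; and comparing top $s$-coefficients gives $(t-h)(t-m) = (t-m')(t-h')$, which is exactly the hypothesis defining $h', m'$. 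So the genuinely alternating candidate on the right is $(s-n')(t-m')(s-\ell')(t-h')$ with these $n', \ell', m', h'$.

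The key step is then to verify that this alternating product actually equals $C$. I would argue by factoring $C$ from the \emph{right}: $C$ has right factor $s-n$, and removing it leaves a bi-degree $(2,1)$ motion polynomial $C_1$ with $C = C_1(s-n)$. Now $C_1$ has norm $\No{(t-h)}\No{(t-m)}\No{(s-\ell)} \in \R[s,t]$, which factors with a single $s$-root; invoking the degree-one factorization result cited in the proof of Theorem~\ref{thm:primalpart} (from \cite{skopenkov19,lercher21}), $C_1$ admits a factorization into linear factors with $s$ and $t$ alternating appropriately. The constraint $h'n \neq nh'$ guarantees that $t-h'$ is \emph{not} a right factor of $C_1$ after we strip $s-n$ --- equivalently, it forces the rightmost factor among the remaining three to be an $s$-factor, namely $s-\ell'$, so that the alternating pattern $\dots(s-\ell')(t-h')$ continues correctly rather than collapsing to $(s-n)(s-n)$-type repetition. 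Chasing norms at each removal step pins down the remaining factors as $t-m'$ and $s-n'$ in exactly that order, giving the claimed equality; the norm bookkeeping $\No{(t-r)} = \No{(t-r')}$, $\No{(s-u)} = \No{(s-u')}$ is then automatic from how the Bennett flips were set up.

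Concretely, the cleanest route is: (i) establish the two univariate flip identities via Lemma~\ref{lem:bennett}; (ii) knowing $(t-m')(t-h') = (t-h)(t-m)$ and that $t-h'$ divides $C$ from the right (shown in Theorem~\ref{thm:primalpart}), write $C = B(t-h')$ with $\bdeg(B) = (1,2)$ and $\No B = \No{(t-m)}\,\No{(s-\ell)}\,\No{(s-n)}$; (iii) show $B$ has $s-\ell'$ as a right factor --- this is where $h'n \neq nh'$ enters, ensuring the leftover quadratic-in-$s$ factor of $B$ is genuinely the non-trivial flip $(s-\ell')$ of something, not a real polynomial that would destroy alternation; (iv) peel off $s-\ell'$, then $t-m'$, then $s-n'$ by repeatedly matching leading coefficients and norms, using that each leftover norm has exactly one remaining root in the appropriate indeterminate. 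The main obstacle I anticipate is step (iii): one must rule out the degenerate re-factorizations of $B$ in which two consecutive factors are of the same indeterminate, and show that precisely the hypotheses $mn \neq nm$ and $h'n \neq nh'$ exclude these. I expect this to come down to checking that certain quaternion differences (such as $\cj n - \ell'$ or $h' - (\text{conjugate of the other }t\text{-root})$) are invertible, i.e.\ the relevant linear factors do not coincide, which is exactly the non-commutativity hypotheses re-expressed; the remaining coefficient comparisons are routine.
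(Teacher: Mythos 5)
Your overall strategy --- reduce to the six possible orderings of the three linear factors of the bi-degree $(1,2)$ polynomial $A$ in $C = A(t-h')$, and use the two non-commutativity hypotheses to kill the non-alternating ones --- matches the structure of the paper's proof. But the step that carries the entire weight of the corollary, your step (iii) together with the exclusion in step (iv) of orderings with two consecutive factors in the same indeterminate, is only announced, not proved: you write that you ``anticipate'' and ``expect'' it to come down to invertibility of certain quaternion differences. That is not the mechanism. The paper excludes the bad orderings by appealing to \cite[Proposition~4.6]{lercher21}: if two factorizations of $C$ are $t$-equivalent (the $s$-factors appear in the same order in both), then specific pairs of factors must commute. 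Concretely, the orderings $(s-\bll)(t-m')(s-n)(t-h')$ and $(t-m')(s-\bll)(s-n)(t-h')$ force $h'n = nh'$, the ordering $(s-\bll)(s-n)(t-m')(t-h')$ is $s$-equivalent to the original factorization and forces $mn = nm$, and the remaining two orderings reduce to these after a Bennett flip of the $s$-quadratic. Without this equivalence result (or an explicit computation replacing it), the link between the hypotheses $mn \neq nm$, $h'n \neq nh'$ and the impossibility of the degenerate orderings is simply asserted, so your argument is incomplete exactly at its crucial point.

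Two smaller issues. First, the identity you ``recall'' from Theorem~\ref{thm:primalpart}, $C = r_1(t-h')(s-\cj{n})(s-n)$, is false: that expression is the remainder $R$ of $C$ upon division by $M = \No{(t-h)}$ and has bi-degree $(1,2)$, whereas $\bdeg(C)=(2,2)$; what the theorem actually yields is $C = A(t-h')$ with $\bdeg(A)=(1,2)$. Second, your use of Lemma~\ref{lem:bennett} in step (i) is legitimate only as a device for \emph{defining} the candidates $n'$, $\ell'$ via the Bennett flip of $(s-\ell)(s-n)$; the lemma's hypothesis is that both factorizations are already alternating, which is what the corollary is trying to prove, so comparing coefficients of $t^2$ cannot be justified before the alternating shape of the second factorization has been established.
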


\begin{proof}
  We have $3!=6$ possibilities for factorizations of $A$ with univariate linear
  factors, where $A$ is defined in \eqref{eq:15}. We use representation
  \eqref{eq:14} and obtain
  \begin{align}
    C &= (t-h)(s-\ell)(t-m)(s-n)=\mathbf{(s-\bll)(t-m')(s-n)}(t-h'), \label{it:1}\\
    C &= (t-h)(s-\ell)(t-m)(s-n)=\mathbf{(t-m')(s-\bll)(s-n)}(t-h'), \label{it:2}\\
    C &= (t-h)(s-\ell)(t-m)(s-n)=\mathbf{(s-\bll)(s-n)(t-m')}(t-h'), \label{it:3}\\
    C &= (t-h)(s-\ell)(t-m)(s-n)=\mathbf{(t-m')(s-n')(s-\bll')}(t-h'), \label{it:4}\\
    C &= (t-h)(s-\ell)(t-m)(s-n)=\mathbf{(s-n')(s-\bll')(t-m')}(t-h'),\quad\text{or} \label{it:5}\\
    C &= (t-h)(s-\ell)(t-m)(s-n)=\mathbf{(s-n')(t-m')(s-\bll')}(t-h'). \label{it:6}
  \end{align}
  We highlighted the different possibilities for factorizations of $A$ by using
  bold letters. Note that the linear $s$-factors on the left have the same norms
  as the linear $s$-factors on the right but the $s$-factors possibly appear in a different order. If
  the order is different, the $s$-factors must correspond in Bennett flips
  by arguments as in the proof of Lemma~\ref{lem:bennett} and are denoted by a
  prime, i.e. $n'$, $\ell'$. If the order is the same, the
  $s$-factors are equal by arguments similar to Lemma~\ref{lem:bennett} and
  \cite[Lemma~3]{hegedus13}. The same arguments apply to linear $t$-factors.

  The two factorizations in \eqref{it:1} are $t$-equivalent in the sense of \cite[Definition~4.3]{lercher21}.\footnote{In \cite[Definition~4.3]{lercher21}, two different factorizations of
    bivariate quaternion polynomials with linear factors are called
    $t$-equivalent, if the linear $s$-factors appear in the same order. This is
    the case in \eqref{it:1} and \eqref{it:2} since $s-\ell$ is the first and
    $s-n$ the second $s$-factor in both factorizations. Such factorizations are
    special since they can be transferred into each other by applying Bennett
    flips and letting appropriate $s$- and $t$-factors commute with each other
    (c.~f. \cite[Proposition~4.6]{lercher21}).} By \cite[Proposition~4.6]{lercher21}, we conclude $h'n=nh'$ (and also
  $h\ell=\ell h$), a case which is excluded by assumption. The same can be said for the two factorizations in~\eqref{it:2}. The second
  factorization in \eqref{it:3} can be rewritten as $(s-\ell)(s-n)(t-h)(t-m)$
  and therefore turns out to be $s$-equivalent to the first factorization in
  \eqref{it:3}. We again use \cite[Proposition~4.6]{lercher21} and conclude
  $mn=nm$, which is also excluded by assumption. The second factorizations in
  \eqref{it:4} and also in \eqref{it:5} are coincident with the second factorizations in
  \eqref{it:2} and \eqref{it:3} after applying Bennett flips of
  $(s-n')(s-\ell')$. Therefore, $C$ needs to admit two different factorizations
  of the form~\eqref{it:6}.
\end{proof}

\begin{rmk}
  Under the weak assumptions of Corollary~\ref{cor:alt},
  Theorem~\ref{thm:primalpart} guarantees existence of a quaternion $\ell \in
  \H$ such that $C=(t-h)(s-\ell)(t-m)(s-n)$ admits a second alternating
  factorization. While our proofs are constructive, the actual computation of
  the second factorization can be simplified a lot with the help of
  Remark~\ref{rmk:bennett}. At first, we compute quaternions $h'$, $\ell'$,
  $m'$, $n' \in \H$ via Bennett flips \eqref{eq:12} of the univariate
  polynomials $(t-h)(t-m)$ and $(s-\ell)(s-n)$, respectively. The second
  factorization is then given by $C=(s-n')(t-m')(s-\ell')(t-h')$. Pseudocode for
  this approach is given in Algorithm~\ref{alg:2alt}.
\end{rmk}

\begin{algorithm}
\caption{Polynomials with two alternating factorizations}
\label{alg:2alt}
\begin{algorithmic}[1]
\Require Quaternions $h$, $m$, $n \in \H$ satisfying the assumptions of Theorem~\ref{thm:primalpart} and Corollary~\ref{cor:alt}.
\Ensure Two tuples $(t-h,s-\ell,t-m,s-n)$ and $(s-n',t-m',s-\ell',t-h')$ such that $(t-h)(s-\ell)(t-m)(s-n)=(s-n')(t-m')(s-\ell')(t-h')$.
\State $r_1 \gets \cj{h}-m$
\State $\ell \gets (2(2\Scal(h)-\Scal(m))-h-(h+r_1)\cj{(h+r_1)}h^{-1})^{-1}(r_1\cj{n}-h^{-1}r_1\cj{n}\cj{(h+r_1)})$
\State $h' \gets -(\cj{h}-m)^{-1}(hm-\No{h}), \ m' \gets h+m-h'$
\State $\ell' \gets -(\cj{\ell}-n)^{-1}(\ell n-\No{\ell}), \ n' \gets \ell+n-\ell'$\\
\Return $(t-h,s-\ell,t-m,s-n), \ (s-n',t-m',s-\ell',t-h')$
\end{algorithmic}
\end{algorithm}

\begin{example}
  \label{ex:primal}
  Setting
  \begin{equation*}
      h = 2\qi-\qj-3\qk,\quad
      m = -6-2\qi+3\qj-3\qk,\quad
      n = -\qj
  \end{equation*}
  and applying Algorithm~\ref{alg:2alt} yields the two alternating
  factorizations
  \begin{multline*}
    (t - 2\qi + \qj + 3\qk)(s + \qi - \qj)(t + 6 + 2\qi - 3\qj + 3\qk)(s + \qj)\\
    =(s - \qj)(t - 2\qi - 3\qj + 3\qk + 6)(s + \qi + \qj)(t + 2\qi + \qj + 3\qk).
  \end{multline*}
\end{example}

\subsection{An extension to dual quaternion polynomials}
\label{subsec:dual-quaternions}

When it comes to applications in space kinematics, it is necessary to formulate
our statements for \emph{dual quaternion polynomials}. The extension of two
different alternating factorizations over the quaternions to dual quaternions is
straightforward by using the following approach:

\begin{description}
\item[Step 1:] We start with two dual quaternions $h = h_p + \eps h_d$ and $m =
  m_p + \eps m_d$ such that $t - h$ and $t - m$ satisfy the motion polynomial
  condition of Definition~\ref{def:motion-polynomial} and we compute Bennett
  flips of $h$ and $m$ to obtain dual quaternions $h'=h_p'+\varepsilon h_d'$ and
  $m'=m_p'+\varepsilon m_d'$, respectively.
\item[Step 2:] We choose a quaternion $n_p \in \H$ such that $h_p$, $m_p$,
  and $n_p$ satisfy the conditions of Corollary~\ref{cor:alt} and apply
  Theorem~\ref{thm:primalpart} to the quaternions $h_p$, $m_p$, and $n_p$. This
  gives a quaternion polynomial $C_p \in \H[s,t]$ that admits the two different
  alternating factorizations
  \begin{equation}
    \label{eq:18}
    C_p=(t-h_p)(s-\ell_p)(t-m_p)(s-n_p) = (s-n_p')(t-m_p')(s-\ell_p')(t-h_p').
  \end{equation}
\item[Step 3:] Finally, we have to determine the respective dual parts $\ell_d$,
  $n_d$, $\ell_d'$, $n_d' \in \H$ of the dual quaternions $\ell = \ell_p + \eps
  \ell_d$, $n = n_p + \eps n_d$, $\ell' = \ell'_p + \eps \ell'_d$, and $n' = n'_p +
  \eps n'_d$, to allow for two factorizations of
  \begin{multline}
    \label{eq:19}
    C\coloneqq(t-h_p-\varepsilon h_d)(s-\ell_p-\varepsilon \mathbf{\bll_d})(t-m_p-\varepsilon m_d)(s-n_p-\varepsilon \mathbf{n_d}) =\\
    (s-n_p'-\varepsilon \mathbf{n_d'})(t-m_p'-\varepsilon m_d')(s-\ell_p'-\varepsilon \mathbf{\bll_d'})(t-h_p'-\varepsilon h_d').
  \end{multline}
  The yet unknown quaternions are highlighted in bold letters. Comparing
  coefficients in $t$ and $s$ for all quaternion coefficients on the left-hand
  and right-hand side of equation \eqref{eq:19} yields a system of $32$
  equations in $16$ unknowns. (Note that the primal parts are equal by
  construction.) Additionally, we have to impose the motion polynomial
  conditions of Definition~\ref{def:motion-polynomial} on the linear
  $s$-polynomials, leading to eight further linear equations in $16$ unknowns (c.~f. Example~\ref{ex:linear-motion-polynomial}):
  \begin{equation*}
    \begin{split}
      \ell_p\mathbf{\cj{\bll_d}}+\mathbf{\bll_d}\cj{\ell_p}=0, \quad \mathbf{\bll_d}+\cj{\mathbf{\bll_d}}=0,\\
      \ell_p'\cj{\mathbf{\bll_d'}}+\mathbf{\bll_d'}\cj{\ell_p'}=0, \quad \mathbf{\bll_d'}+\cj{\mathbf{\bll_d'}}=0,\\
      n_p\cj{\mathbf{n_d}}+\mathbf{n_d}\cj{n_p}=0, \quad \mathbf{n_d}+\cj{\mathbf{n_d}}=0,\\
      n_p'\cj{\mathbf{n_d'}}+\mathbf{n_d'}\cj{n_p'}=0, \quad \mathbf{n_d'}+\cj{\mathbf{n_d'}}=0.
    \end{split}
  \end{equation*}
  In total, we have to solve a system of $40$ linear equations in $16$ unknowns.
  The respective linear system of equations seems to be highly overconstrained.
  Quite surprisingly, it turns out to always admit a solution. This will be
  proved by a straightforward computation in Section~\ref{sec:DH-parameters} so that we have:
\end{description}

\begin{thm}
  \label{th:dual-extension}
  The construction outlined in above Steps~1 to 3 generically yields a motion polynomial $C$ satisfying
  \begin{equation*}
    C = (t-h)(s-\ell)(t-m)(s-n) = (s-n')(t-m')(s-\ell')(t-h')
  \end{equation*}
  with linear motion polynomials $t-h$, $s-\ell$, \ldots, $t-h'$.
\end{thm}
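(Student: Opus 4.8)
The plan is to turn the existence claim of Theorem~\ref{th:dual-extension} into an explicit, if lengthy, linear-algebra verification, exactly as the text preceding it promises. First I would fix all the data produced by Steps~1 and~2: the dual quaternions $h = h_p + \eps h_d$, $m = m_p + \eps m_d$, their Bennett flips $h' = h_p' + \eps h_d'$, $m' = m_p' + \eps m_d'$, and the primal quaternions $\ell_p$, $n_p$, $\ell_p'$, $n_p'$ coming from Theorem~\ref{thm:primalpart} and Corollary~\ref{cor:alt}. The only unknowns are the eight dual quaternions $\bll_d$, $n_d$, $\bll_d'$, $n_d'$ (the other dual parts $\ell_d$, \ldots are typos in \eqref{eq:19} — the bold ones are what is free). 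Expanding both sides of \eqref{eq:19} as bivariate polynomials in $s,t$ and matching the coefficients of each monomial $s^it^j$ in $\H$ gives the announced $32$ equations; the $\eps^0$-parts of these are automatically satisfied because of \eqref{eq:18}, so only the $\eps^1$-parts matter, and they are linear in the $16$ real-quaternion unknowns. Adjoining the eight Study-type conditions on the linear $s$-factors yields the full $40\times 16$ system.

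The heart of the argument is to show this overdetermined system is consistent. I would exploit structure rather than brute Gaussian elimination. The key observation is that taking norms on both sides of \eqref{eq:19} forces $\No{(s-\ell)}\No{(s-n)} = \No{(s-n')}\No{(s-\ell')}$ and $\No{(t-h)}\No{(t-m)} = \No{(t-m')}\No{(t-h')}$ in $\R[s,t]$; the latter already holds by Step~1, and the former, together with Lemma~\ref{lem:bennett} applied at the dual level, says that $(s-\ell)(s-n) = (s-n')(s-\ell')$ must hold as an identity in $\DH[s]$. So one can first solve the \emph{univariate} dual problem: given the primal identity $(s-\ell_p)(s-n_p) = (s-n_p')(s-\ell_p')$, find dual parts making $(s-\ell)(s-n) = (s-n')(s-\ell')$ hold over $\DH$ with all four linear factors satisfying the motion-polynomial condition. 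This is the dual-quaternion analogue of a Bennett flip; its solvability should reduce, via formulas parallel to \eqref{eq:12} with $\eps$-bookkeeping, to invertibility of the same primal element $(\cj{\ell_p} - n_p)$ (equivalently $(\cj{h_p}-m_p)$, already guaranteed), plus the fact that the dual part of a motion polynomial's norm being forced to zero is one linear condition that the flip formula respects. Once the $s$-factors are pinned down consistently, the remaining equations in \eqref{eq:19} — the coefficients of $s^0t^j$ and mixed monomials — determine nothing new: they are consequences of the two now-established univariate identities together with the primal bivariate identity \eqref{eq:18}, by the same coefficient-comparison bookkeeping used in the proof of Theorem~\ref{thm:primalpart}. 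The word ``generically'' in the statement covers the finitely many invertibility conditions (the denominators in the flip formulas being nonzero), which hold on a Zariski-dense open subset of admissible input.

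The main obstacle I expect is the third bullet of Step~3: there is no published factorization theory for bivariate \emph{dual} quaternion polynomials, so one cannot simply cite an analogue of \cite{li18} or \cite{lercher21}. The honest route is the ``straight forward computation'' the authors advertise — set up the $40\times 16$ system symbolically in a computer algebra system with the Step~1/Step~2 data as parameters, and exhibit a solution, checking that its denominators are exactly the invertibility conditions already assumed. The delicate point is organizing this so that the eight Study conditions on $\bll_d$, $n_d$, $\bll_d'$, $n_d'$ do not overdetermine things: I would argue that four of them (the scalar-part conditions $\bll_d + \cj{\bll_d} = 0$, etc.) are automatically satisfied by the solution that makes $(s-\ell)(s-n) = (s-n')(s-\ell')$ hold — because that polynomial identity already encodes that the $s$-factors are genuine rotations — leaving only the Plücker-type conditions, which are absorbed into the one-parameter freedom in choosing $n_d$ (the axis of $s-n$ may be translated along itself without affecting the primal data). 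So the true count of independent constraints drops well below $40$, and consistency follows. I would present the verification in Section~\ref{sec:DH-parameters} as promised, reducing the write-up here to: set up \eqref{eq:19}, reduce mod $\eps^0$, invoke the univariate dual Bennett flip for the $s$-factors, and note that the residual equations and all Study conditions are then identically satisfied.
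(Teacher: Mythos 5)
Your structural reduction contains a genuine gap. You claim that once the univariate dual identities $(s-\ell)(s-n)=(s-n')(s-\ell')$ and $(t-h)(t-m)=(t-m')(t-h')$ are established over $\DH$, the remaining coefficient equations of \eqref{eq:19} ``determine nothing new'' and follow from the primal identity \eqref{eq:18}. This is false, and its failure is precisely the hard part of the theorem. The two univariate identities only account for the coefficients of $t^2$ and of $s^2$ (that is the content of Lemma~\ref{lem:bennett}); the mixed coefficients $s^it^j$ with $i,j\le 1$ impose further conditions that are not consequences of the univariate flips. Already in the primal case this is why Theorem~\ref{thm:primalpart} must choose $\ell_p$ in one very particular way rather than taking it arbitrary and flipping. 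The same happens at the dual level: a dual Bennett flip of $(s-\ell)(s-n)$ exists for a multi-parameter family of admissible $(\ell_d,n_d)$ --- each admissible $n_d$ corresponds to a different axis parallel to the direction of $\Vect(n_p)$, not, as you write, to sliding the axis along itself, which leaves Plücker coordinates unchanged and is no freedom at all --- whereas the full $40\times16$ system turns out to have a \emph{unique} solution. So the mixed equations genuinely cut the solution set down, and your argument that they and the Study conditions are ``identically satisfied'' would, if correct, contradict that uniqueness.

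Your fallback route (set up the linear system symbolically and solve) is indeed what the paper does, but it omits the one ingredient that makes it feasible: before writing down the system, the coordinates are normalized without loss of generality --- the $t$-Bennett linkage is taken aligned at $s_0=t_0=\infty$, all axes meet the first coordinate axis perpendicularly, and the half-turn symmetry axis is the third coordinate axis --- so that $h$, $m$, $h'$, $m'$ and the primal parts $\ell_p$, $\ell'_p$, $n'_p$ acquire the short explicit forms \eqref{eq:21} and \eqref{eq:22}. Only then is the $40\times16$ system tractable by computer algebra, which exhibits the unique solution (displayed after the further admissible normalization $n_1=0$). The authors state explicitly that without such a normalization the raw construction yields formulas unsuitable for further processing, so ``solve the system symbolically'' is not by itself a complete plan.
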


\begin{example}
  We build on Example \ref{ex:primal} and additionally choose quaternions
  \begin{equation*}
    h_d \coloneqq 23\qi - 74\qj + 40\qk \quad \text{and} \quad m_d \coloneqq -45\qi - 66 \qj - 36\qk.
  \end{equation*}
  The polynomials $t-h_p-\varepsilon h_d$ and $t-m_p-\varepsilon m_d$ are motion
  polynomials:
  \begin{equation*}
    \begin{aligned}
      \No{(t-h_p-\varepsilon h_d)} &= t^2+14 \in \R[t] \setminus \{0\}, \\
      \No{(t-m_p-\varepsilon m_d)} &= t^2+12t+58 \in \R[t] \setminus \{0\}.
    \end{aligned}
  \end{equation*}
  We compute Bennett flips of $h \coloneqq h_p+\varepsilon h_d$ and $m \coloneqq
  m_p+\varepsilon m_d$ and obtain
  \begin{equation*}
    \begin{aligned}
      m' =& -6 + 2\qi + 3\qj - 3\qk - \varepsilon (21\qi + 22\qj + 36\qk)\\
      h' =& -2\qi - \qj - 3\qk - \varepsilon (\qi + 118\qj - 40\qk).
    \end{aligned}
  \end{equation*}
  The unknowns $\ell_d$, $n_d$, $\ell_d'$, $n_d'$ are obtained by solving the
  respective system of linear equations:
  \begin{alignat*}{2}
    \ell_d &= -11\qi - 11\qj + 2\qk, &\quad n_d &= -3\qi - 2\qk,\\
    \ell_d' &= 11\qi - 11\qj - 2\qk, &\quad n_d' &= -25\qi + 2\qk.
  \end{alignat*}
  Finally, we get a motion polynomial in $\DH[s,t]$ with two alternating
  factorizations:
  \begin{multline*}
    (t - 2\qi + \qj + 3\qk + \varepsilon (-23\qi + 74\qj - 40\qk))(s + \qi - \qj + \varepsilon (11\qi + 11\qj - 2\qk))\\
    (t + 6 + 2\qi - 3\qj + 3\qk + \varepsilon(45\qi + 66\qj + 36\qk))(s + \qj + \varepsilon(3\qi + 2\qk)) \\
    = (s - \qj + \varepsilon(25\qi - 2\qk))(t+6 - 2\qi - 3\qj + 3\qk +  \varepsilon(21\qi + 22\qj + 36\qk))\\
    (s+\qi+\qj-\varepsilon(11\qi - 11\qj - 2\qk))(t+2\qi + \qj + 3\qk + \varepsilon(\qi + 118\qj - 40\qk)).
  \end{multline*}
\end{example}

\section{The Multi-Bennett 8R Mechanism}
\label{sec:8R-Mechanism}

In the preceding section we proved existence of bivariate quaternion polynomials
$C \in \H[s,t]$ that admit two factorizations with linear quaternion polynomials and we hinted at the possibility to extend this to motion
polynomials of the shape
\begin{equation*}
  C = (t-h)(s-\ell)(t-m)(s-n) = (s-n')(t-m')(s-\ell')(t-h')
\end{equation*}
with \emph{dual quaternions} $h$, $\ell$, $m$, $n$, $n'$, $m'$, $\ell'$, and
$h'$. By construction, each linear factor in $t$ or in $s$ parametrizes a
rotation around a straight line in space so that each of the two factorizations
gives rise to an open 4R chain whose end-effectors share the two-parametric
rational motion parametrized by $C$. Thus, this motion is contained in the
configuration variety of the closed-loop 8R linkage formed by the two open 4R
chains. Investigation of properties of this 8R linkage is the topic of this
section. In doing so, we only consider the \emph{generic} case, i.e., we assume
that no special algebraic relations between the input parameters are fulfilled.
At present, a comprehensive discussion of all special cases seems of little
value.

The linkage's zero configuration is given by $t = s = \infty$ because then there
is zero rotation in all joints (c.f. Definition~\ref{def:infinity}). The axes'
Plücker coordinates in this zero configuration are simply the respective
vector parts
\begin{gather*}
  \Vect(h),\quad
  \Vect(\ell),\quad
  \Vect(m),\quad
  \Vect(n),\quad
  \Vect(n'),\quad
  \Vect(m'),\quad
  \Vect(\ell'),\quad
  \Vect(h')
\end{gather*}
of the linear factors. The positions of these axes in the
configuration determined by a general parameter pair $(s,t)$ can be computed via
\eqref{eq:5} as
\begin{equation}
  \label{eq:20}
  \begin{aligned}
    H(s,t) &= \Vect(h),\\
    L(s,t) &= \ej{\bigl((t-h)\Vect(\ell)(t-\cj{h})\bigr)},\\
    M(s,t) &= \ej{\bigl((t-h)(s-\ell)\Vect(m)(s-\cj{\ell})(t-\cj{h})\bigr)},\\
    N(s,t) &= \ej{\bigl((t-h)(s-\ell)(t-m)\Vect(n)(t-\cj{m})(s-\cj{\ell})(t-\cj{h})\bigr)},\\
    N'(s,t) &= \Vect(n'),\\
    M'(s,t) &= \ej{\bigl((s-n')\Vect(m')(s-\cj{n'})\bigr)},\\
    L'(s,t) &= \ej{\bigl((s-n')(t-m')\Vect(\ell')(t-\cj{m'})(s-\cj{n'})\bigr)},\\
    H'(s,t) &= \ej{\bigl((s-n')(t-m')(s-\ell')\Vect(h')(s-\cj{\ell'})(t-\cj{m'})(s-\cj{n'})\bigr)}.
\end{aligned}
\end{equation}
Note that $H(s,t)$ and $N'(s,t)$ are independent of $t$ and $s$, $L(s,t)$
depends only on $t$ and $M'(s,t)$ depends only on~$s$.

For fixed $t = t_0$, the axes $L(s,t_0)$, $N(s,t_0)$, $L'(s,t_0)$, and
$N'(s,t_0)$ form, in that order, a Bennett linkage
whose motion is parametrized by $s$. We call it the \emph{$s$-Bennett linkage
  at $t_0$.} Similarly, for fixed $s = s_0$ we obtain a \emph{$t$-Bennett
  linkage at $s_0$,} formed by $H(s_0,t)$, $M(s_0,t)$, $H'(s_0,t)$, and
$M'(s_0,t)$.

It is well-known (and follows from Bennett's original description of his
mechanism as \emph{isogram}, c.f. \cite[Section~10.3]{odehnal20}) that for given
$t_0$ there exist two values for $s \in \R \cup \{\infty\}$ at which the
$t_0$-Bennett mechanism is in a configuration where its four axes have the same
common perpendicular. We call this an \emph{aligned configuration.} A similar
statement holds true for every $s_0$-Bennett mechanism.

The aligned configurations will play a crucial role in our computation of the
8R-linkage's Denavit-Hartenberg parameters in the next section. The 8R-linkage
itself exhibits an interesting aligning behavior as well that will be
investigated in more detail in the forthcoming
Section~\ref{sec:Bennett-sub-mechanisms}.

\subsection{Denavit-Hartenberg Parameters}
\label{sec:DH-parameters}

The aim of this section is the proof of simple relations among the
multi-Bennett's Denavit-Hartenberg parameters. In order to do so, we will
compute parametrizations of its moving axes with respect to special coordinates.
None of these assumptions is a loss of generality so that the resulting
statements are of general validity and are suitable for proving the missing
piece in Theorem~\ref{th:dual-extension}.

Our computation of the 8R-linkage's Denavit-Hartenberg parameters will profit a
lot from the geometry of its $t$- and $s$-Bennett linkages. According to
\cite[Section~10.3]{odehnal20}, the axes of any Bennett linkage can be computed
by
\begin{itemize}
\item picking two arbitrary points $F_h$, $F_m$ and a straight line $z$,
\item rotating $F_h$ and $F_m$, respectively, around $z$ by a rotation angle of
  $\ang{180}$ to obtain a spatial quadrilateral $F_h$, $F_m$, $F_{h'}$,
  $F_{m'}$ with equal opposite sides, and
\item selecting the axes $H$, $M$, $H'$, $M'$ as the perpendiculars to the
  quadrilateral's sides at $F_h$, $F_m$, $F_{m'}$, and $F_{h'}$, respectively
  (Figure~\ref{fig:bennett-linkage}).
\end{itemize}

\begin{figure}
  \centering
  \includegraphics{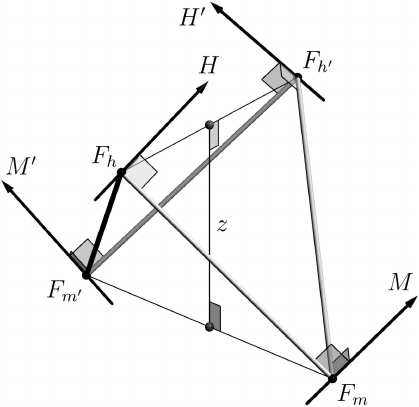}
\caption{Axis configuration of a Bennett linkage}
  \label{fig:bennett-linkage}
\end{figure}

In order to compute the linkage's Denavit-Hartenberg parameters, we assume that
the $t$-Bennett linkage at $s_0 = \infty$ is aligned for $t_0 = \infty$. This is
no loss of generality as it can be achieved via re-parametrizations of type
\eqref{eq:8}. Moreover, we assume that the axes in this configuration intersect
the first coordinate axis perpendicularly and that the axis of half-turn
symmetry is the third coordinate axis. This entails a slight alteration of the
construction from above. We assign coordinates
\begin{equation*}
  F_h = (h_1, 0, 0),\quad
  F_m = (m_1, 0, 0),\quad
  F_{h'} = (-h_1, 0, 0),\quad
  F_{m'} = (-m_1, 0, 0)
\end{equation*}
to the common normal feet and
\begin{equation*}
  D_h = (0, h_2, h_3 ),\quad
  D_m = (0, m_2, m_3),\quad
  D_{h'} = (0, -h_2, h_3),\quad
  D_{m'} = (0, -m_2, m_3).
\end{equation*}
to the corresponding axis directions. By this choice, we ensure equal opposite
distance and angles but not equality of Bennett ratios.\footnote{An important characteristic of a Bennett mechanism is its \emph{Bennett ratio,}
the ratio between sine of angle and distance of two consecutive axes, which is
independent of the chosen pair of consecutive axes
\cite[Equation~(11.69)]{mccarthy11}.} A straightforward
computation yields that this can be satisfied by
\begin{equation*}
  m_1 = \frac{h_1h_3m_2}{h_2m_3}
  \quad\text{or}\quad
  m_1 = \frac{h_1h_2m_3}{h_3m_2}.
\end{equation*}
Since both expressions are equal up to interchanging $h_2$ with $h_3$ and $m_2$
with $m_3$ we can use either of them. The following computations use $m_1 =
h_1h_3m_2/(h_2m_3)$.

Now, we compute the Plücker coordinates, viewed as dual quaternions, of the axes
in the zero configuration as
\begin{equation}
  \label{eq:21}
  \begin{aligned}
    \Vect(h) &= D_h + \eps (F_h \times D_h) = h_2\qj + h_3\qk - h_1\eps(h_3\qj - h_2\qk), \\
    \Vect(m) &= D_m + \eps (F_m \times D_m) = m_2\qj + m_3\qk - \frac{h_1h_3m_2}{h_2m_3}\eps(m_3\qj - m_2\qk), \\
    \Vect(h') &= D_{h'} + \eps (F_{h'} \times D_{h'}) = -h_2\qj + h_3\qk + h_1\eps(h_3\qj + h_2\qk), \\
    \Vect(m') &= D_{m'} + \eps (F_{m'} \times D_{m'}) = -m_2\qj + m_3\qk + \frac{h_1h_3m_2}{h_2m_3}\eps(m_3\qj + m_2\qk).
  \end{aligned}
\end{equation}
Here, we identified in the usual way vectors in $\R^3$ with vectorial
quaternions. The coefficients $h$, $m$, $h'$, and $m'$ in the factors $t-h$,
$t-m$, $t-h'$, $t-m'$ of the sought motion polynomial are linear combinations of
$1$ and $\Vect(h)$, $\Vect(m)$, $\Vect(h')$, and $\Vect(m')$, respectively. The
coefficients cannot be chosen arbitrarily but are subject to the closure
condition $(t-h)(t-m) = (t-m')(t-h')$. This is ensured by having
\begin{equation*}
  \begin{aligned}
    h &= \mu - \nu m_2 (\qj + \frac{h_3}{h_2}\qk) + \frac{\nu h_1m_2}{h_2}\eps (h_3\qj - h_2\qk),\\
    m &= \mu + \nu (m_2\qj + m_3\qk) - \frac{\nu h_1h_3m_2}{h_2m_3}\eps (m_3\qj - m_2\qk),\\
    h' &= \mu + \nu m_2 (\qj - \frac{h_3}{h_2}\qk) - \frac{\nu h_1m_2}{h_2}\eps (h_3\qj + h_2\qk),\\
    m' &= \mu - \nu (m_2\qj - m_3\qk) + \frac{\nu h_1h_3m_2}{h_2m_3}\eps (m_3\qj + m_2\qk)
  \end{aligned}
\end{equation*}
with parameters $\nu$, $\mu \in \R$.

So far, we have followed Step~1 of Section~\ref{sec:bivariate} and computed, in
full generality but at a special configuration, the axes and corresponding dual
quaternions that move with parameter $t$. For Steps~2 and 3 we make the general
\emph{ansatz}
\begin{equation*}
  \ell = \ell_p + \eps \ell_d,\quad
  n = n_p + \eps n_d,\quad
  \ell' = \ell'_p + \eps \ell'_d,\quad
  n' = n'_p + \eps n'_d
\end{equation*}
with $\ell_p$, $\ell_d$, $n_p$, $n_d$, $\ell'_p$, $\ell'_d$, $n'_p$, $n'_d \in
\H$. Step~2 gives the primal parts $\ell_p$, $\ell'_p$, and $n_p'$ in terms of the
indetermined coefficients of $n_p = n_0 + n_1\qi + n_2\qj + n_3\qk$:
\begin{equation}
  \label{eq:22}
  \begin{aligned}
    \ell_p &=
             \frac{1}{h_2 m_3 + h_3 m_2} \Bigl(
             h_2(m_3n_0-2m_2n_1)+h_3m_2n_0
             + n_1(h_2m_3-h_3m_2) \qi \\
           &\qquad\qquad\hfill
             + (h_2(m_3n_2-2m_2n_3)-h_3m_2n_2) \qj
             - n_3(h_2m_3+h_3m_2) \qk
             \Bigr),
             \\
    \ell'_p &=
              \frac{1}{h_2m_3+h_3m_2} \Bigl(
              h_2(m_3n_0-2m_2n_1)+h_3m_2n_0
              + n_1(h_2m_3-h_3m_2) \qi  \\
             &\qquad\qquad\hfill
              + (h_2(m_3n_2-2m_2n_3)-h_3m_2n_2) \qj
              + n_3(h_2m_3+h_3m_2) \qk
              \Bigr),\\
    n_p' &= n_0 + n_1\qi + n_2\qj - n_3\qk.
  \end{aligned}
\end{equation}
This ensures that the primal parts on both sides of
\begin{equation*}
  (t - h)(s - \ell)(t - m)(s - n) =
  (s - n')(t - m')(s - \ell')(t - h')
\end{equation*}
agree. Equality of the respective dual parts together with the motion polynomial
condition boils down to a system of linear equations (Step~3) for the real
coefficients of $\ell_d$, $\ell'_d$, $n_d$, and $n'_d$ which we solve with a
computer algebra system. There is, indeed, a unique solution whence we have
provided the missing piece in the proof of Theorem~\ref{th:dual-extension}.

The solutions are just a bit too long to be displayed here. Therefore, and also
having in mind forthcoming computations, we strive for further simplifications.
By a rational re-parametrization we can achieve that the revolute axis
$\Vect(n)$ is perpendicular to the first coordinate axis in the zero
configuration, at $s_0 = \infty$. This having done, we see that necessarily $n_1
= 0$. With this admissible simplification, the solutions for the dual parts are
\begin{multline*}
  \ell_d = \frac{h_1n_2}{\Delta}
  \bigl(
  n_3(h_2m_3+h_3m_2)(h_2m_2n_3-h_2m_3n_2+h_3m_2n_2+h_3m_3n_3) \qj\\
  -(h_2m_2n_3-h_2m_3n_2+h_3m_2n_2+h_3m_3n_3)(2h_2m_2n_3-h_2m_3n_2+h_3m_2n_2)\qk
  \bigr),
\end{multline*}
\begin{multline*}
  n_d = \frac{h_1}{\Delta}
  \bigl(
  -n_3(2h_2m_2n_3-h_2m_3n_2+h_3m_2n_2)(h_2m_2n_3-h_2m_3n_2+h_3m_2n_2+h_3m_3n_3) \qj \\
  + n_2(2h_2m_2n_3-h_2m_3n_2+h_3m_2n_2)(h_2m_2n_3-h_2m_3n_2+h_3m_2n_2+h_3m_3n_3) \qk
  \bigr),
\end{multline*}
\begin{multline*}
  \ell'_d = \frac{h_1n_2}{\Delta}
  \bigl(
  n_3(h_2m_3+h_3m_2)(h_2m_2n_3-h_2m_3n_2+h_3m_2n_2+h_3m_3n_3)\qj \\
  + (h_2m_2n_3-h_2m_3n_2+h_3m_2n_2+h_3m_3n_3)(2h_2m_2n_3-h_2m_3n_2+h_3m_2n_2) \qk
  \bigr)
\end{multline*}
\begin{multline*}
  n'_d = \frac{h_1}{\Delta}
  \bigl(
  -n_3(2h_2m_2n_3-h_2m_3n_2+h_3m_2n_2)(h_2m_2n_3-h_2m_3n_2+h_3m_2n_2+h_3m_3n_3) \qj \\
  -n_2(2h_2m_2n_3-h_2m_3n_2+h_3m_2n_2)(h_2m_2n_3-h_2m_3n_2+h_3m_2n_2+h_3m_3n_3) \qk
  \bigr)
\end{multline*}
where
\begin{equation*}
  \Delta = h_2m_3((h_3m_2-h_2m_3)(n_2^2-n_3^2)+2(h_2m_2+h_3m_3)n_2n_3).
\end{equation*}

But having $n_1 = 0$ has further consequences:
\begin{itemize}
\item A glance at \eqref{eq:22} immediately confirms that all coefficients of
  $\qi$ vanish for all revolute axes in the zero configuration. Therefore, all
  revolute axes in the zero configuration \emph{are perpendicular to the first
    coordinate axis.}
\item It can readily be verified that the intersection conditions
  \begin{multline*}
    \Vect(\ell) \qi - \qi\cj{\Vect(\ell)} =
    \Vect(n) \qi - \qi\cj{\Vect(n)} \\
    = \Vect(\ell') \qi - \qi\cj{\Vect(\ell')} =
    \Vect(n') \qi - \qi\cj{\Vect(n')} = 0.
  \end{multline*}
  between the first coordinate axis (with Plücker coordinates $\qi$) and all
  mechanism axes that move with parameter $s$ in the zero configuration are
  satisfied.
\end{itemize}

This means that in the zero configuration \emph{all revolute axes intersect the
  first coordinate axis perpendicularly.} We infer that not only the $t$-Bennett
mechanism but also the $s$-Bennett mechanism aligns and both share the common
perpendicular of their axis. Since each Bennett mechanism has two aligned
configurations and there is nothing special about our zero configuration, we can
say:

\begin{thm}
  \label{th:folded}
  The multi-Bennett 8R mechanism has four aligned configurations in which all
  eight revolute axes share a common perpendicular line.
\end{thm}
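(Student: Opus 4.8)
The plan is to build directly on the computation that precedes the statement. That computation already exhibits one aligned configuration explicitly, namely the zero configuration $t = s = \infty$, in which all eight revolute axes meet the first coordinate axis at right angles; it also shows that in this configuration both the $t$-Bennett linkage at $s = \infty$ and the $s$-Bennett linkage at $t = \infty$ are simultaneously aligned, with the first coordinate axis as their common perpendicular. What remains is a counting argument: there are exactly four such configurations.

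The first step is to pin down the common perpendicular. Among the eight parametrizations in \eqref{eq:20}, the axes $H(s,t) = \Vect(h)$ and $N'(s,t) = \Vect(n')$ do not move with either parameter, and in the generic case they are skew; hence they possess a unique common perpendicular, which I will call $z_0$. Any line perpendicular to all eight axes is in particular perpendicular to $H$ and to $N'$, so in \emph{every} aligned configuration of the 8R mechanism the common perpendicular equals $z_0$. Consequently $(s_0,t_0)$ is an aligned configuration precisely when $z_0$ meets the six remaining, moving axes $L$, $M$, $N$, $M'$, $L'$, $H'$ at right angles.

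The second step exploits that this condition decouples partially. By \eqref{eq:20} the axis $L(s,t)$ depends on $t$ alone, being the orbit of $\Vect(\ell)$ under the one-parameter group of rotations about $\Vect(h)$, a circle of lines; since $z_0$ is already perpendicular to that rotation's axis $\Vect(h)$, the requirement that $z_0$ meet $L(t_0)$ perpendicularly is a condition on $t_0$ alone, satisfied (one expects) by exactly two values of $t_0$, one of which is $t_0 = \infty$. Symmetrically, $M'(s,t)$ depends on $s$ alone, and $z_0$ meets $M'(s_0)$ perpendicularly for exactly two values of $s_0$, one being $s_0 = \infty$. This narrows the aligned configurations to four candidate pairs, and the third step is to verify that at each of them $z_0$ is also perpendicular to the four axes $M$, $N$, $L'$, $H'$ that depend on both parameters. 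For $(\infty,\infty)$ this is what the preceding computation established; for the other three I would argue either from the explicit coordinate parametrizations set up in Section~\ref{sec:DH-parameters}, or by transporting the zero-configuration statement through the half-turn symmetries of the $t$- and $s$-Bennett linkages, which act on the configuration surface $\cong \P^1 \times \P^1$ and permute the candidates. A more synthetic way to package the last two steps uses Bennett's isogram property directly: each sub-Bennett linkage has exactly two aligned configurations along its one-parameter motion, so ``the $s$-Bennett aligned along $z_0$'' and ``the $t$-Bennett aligned along $z_0$'' each cut out a curve in the configuration surface that is a double cover of its parameter line, and a B\'ezout-type count on $\P^1 \times \P^1$, anchored by the known simple solution at the zero configuration, gives their intersection as exactly $2 \times 2 = 4$ points.

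The main obstacle is this final bookkeeping: showing the four candidates are pairwise distinct and all aligned, so that none coalesce and no spurious solutions -- or solutions ``at infinity'' -- intrude. This is where the genericity hypothesis has to be used: to keep $\Vect(h)$ and $\Vect(n')$ skew so that $z_0$ is well defined and unique; to keep the two circles of lines swept by $L$ and $M'$ in sufficiently general position relative to $z_0$ so that each contributes exactly two solutions; and to ensure the relevant intersection on $\P^1 \times \P^1$ is transversal and of the expected cardinality. The delicate point is the coincidence of the two sub-Bennett common perpendiculars at the three non-trivial candidates -- equivalently, that the two curves meet only where the common perpendicular equals $z_0$ -- and the explicit formulas now available from the Denavit-Hartenberg computation are the natural tool for settling it.
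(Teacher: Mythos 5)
Your reduction is partly sharper than what the paper itself writes down, and the skeleton is sound: since $H=\Vect(h)$ and $N'=\Vect(n')$ are the two axes that do not move at all, any line perpendicular to and meeting all eight axes must be the unique common perpendicular $z_0$ of these two (generically skew) lines; and since $L$ depends only on $t$ and $M'$ only on $s$, the perpendicular-incidence conditions on those two axes already confine aligned configurations to a product set of at most $2\times2$ parameter pairs. Combined with the explicitly computed total alignment of the zero configuration, this yields ``at least one, at most four'' essentially for free.

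The gap is exactly where you locate it, and none of the three routes you offer closes it as stated. The substance of the theorem is that the three non-trivial candidates really are totally aligned, and this amounts to the claim that the two parameter values at which a $t$-Bennett sub-linkage aligns do not depend on $s$ (and vice versa) --- the content of \eqref{eq:23}--\eqref{eq:24} and Theorem~\ref{th:folded-Bennett}, which the paper obtains by direct computation from \eqref{eq:20} in the adapted coordinates of Section~\ref{sec:DH-parameters}. Your B\'ezout count presupposes this independence: it needs the two alignment loci in $\P^1\times\P^1$ to have bidegrees $(2,0)$ and $(0,2)$, i.e.\ to be unions of two parameter lines each, which is precisely what has to be proved; a priori the loci could be curves of higher bidegree, meet in more than four points, or meet where the sub-linkages align along a common perpendicular other than $z_0$. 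The half-turn-symmetry route likewise needs an argument that the symmetry of one sub-Bennett extends to the whole 8R loop, which is not automatic. So the only reliable completion is the one you list first --- the explicit verification using the parametrizations of Section~\ref{sec:DH-parameters} --- and that computation, which is where the paper does the actual work, remains to be carried out in your proposal.
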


The four aligned configurations of an example can be seen in the corners of
Figure~\ref{fig:8R}.

From the representations \eqref{eq:21} and \eqref{eq:22} of the axes' Plücker
coordinates, it is straightforward to compute the mechanism's Denavit-Hartenberg
parameters. The information given in \cite[Section~2.1.2]{pottmann10} is
sufficient for that purpose but more explicit formulas are also available, for
example in \cite{faria19}. Using computer algebra, it is easy to verify

\begin{thm}
  \label{th:DH-relations}
  The offsets of a multi-Bennett 8R mechanism are all zero. Opposite distances as
  well as opposite angles are equal.
\end{thm}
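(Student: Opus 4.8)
The plan is to verify Theorem~\ref{th:DH-relations} by direct computation from the explicit Plücker coordinates of the eight axes, which are now fully available: equations \eqref{eq:21} and \eqref{eq:22} together with the dual-part formulas for $\ell_d$, $n_d$, $\ell'_d$, $n'_d$ (derived under the admissible simplification $n_1 = 0$) give all eight axis lines $\Vect(h)$, $\Vect(\ell)$, $\Vect(m)$, $\Vect(n)$, $\Vect(n')$, $\Vect(m')$, $\Vect(\ell')$, $\Vect(h')$ in the zero configuration as explicit rational dual quaternions in the free real parameters $h_1,h_2,h_3,m_2,m_3,n_0,n_2,n_3,\mu,\nu$ (recall $m_1$ is fixed by the Bennett-ratio condition $m_1 = h_1h_3m_2/(h_2m_3)$). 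Since the Denavit-Hartenberg parameters (offset, distance, twist angle) of a pair of consecutive axes are functions only of the relative position of the two lines, and since the zero configuration is generic on the two-dimensional motion component, it suffices to compute these parameters once, in the zero configuration, for each of the eight consecutive pairs along the loop $h,\ell,m,n,h',m',\ell',n',h$ (in the cyclic order determined by the factorization and the closure, cf.\ \eqref{eq:20}).

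First I would fix, once and for all, the cyclic ordering of the eight axes around the closed loop and set up the standard Denavit-Hartenberg recursion along it, using the line-geometric formulas from \cite[Section~2.1.2]{pottmann10} or the explicit expressions in \cite{faria19}: for two oriented lines with direction vectors $\mathbf{d}_i,\mathbf{d}_{i+1}$ and moment vectors, the twist angle $\alpha_i$ is the angle between directions, the distance $a_i$ is the length of the common perpendicular, and the offset $d_i$ is the signed distance along the $i$-th axis between the feet of the two adjacent common perpendiculars. Because we have already established (in the two itemised observations preceding Theorem~\ref{th:folded}) that in the zero configuration \emph{all eight axes intersect the first coordinate axis ($\qi$) perpendicularly}, the first coordinate axis is a common perpendicular to every consecutive pair simultaneously; this pins down the common-normal feet to lie on the $\qi$-axis and makes the offset computation completely transparent. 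I would then feed the explicit coordinates into a computer-algebra system, form the eight triples $(a_i,\alpha_i,d_i)$, and read off: (i) $d_i = 0$ for all $i$ — the offsets vanish because every common-normal foot sits on the $\qi$-axis, so consecutive feet on a given axis coincide at its intersection point with the $\qi$-axis; (ii) $a_i = a_{i+4}$ and $\alpha_i = \alpha_{i+4}$ — opposite distances and opposite angles coincide. Claim (ii) for the four ``$t$-type'' gaps ($h$–$\ell$, $m$–$n$, $h'$–$\ell'$, $m'$–$n'$ after relabelling) and the four ``$s$-type'' gaps follows from the manifest symmetries built into \eqref{eq:21}–\eqref{eq:22}, namely the half-turn about the $\qk$-axis (sending $h_2 \mapsto -h_2$, primed $\leftrightarrow$ unprimed) that underlies the Bennett closure $(t-h)(t-m) = (t-m')(t-h')$, and the corresponding symmetry for the $s$-factors inherited from $n_p' = n_0 + n_1\qi + n_2\qj - n_3\qk$ and the sign pattern in $\ell'_p,\ell'_d,n'_d$ versus $\ell_p,\ell_d,n_d$.

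The main obstacle I anticipate is not conceptual but organisational: getting the \emph{cyclic ordering and the orientations of the eight axes correct}, since a sign error in the orientation of a single line flips the sign of an offset or complements a twist angle and would spuriously break the claimed equalities. I would guard against this by cross-checking the ordering against the two open 4R chains in \eqref{eq:20} at $s=t=\infty$ and by verifying that the computed DH parameters of the two sub-Bennett linkages (the even and odd quadruples) satisfy the known Bennett relations — equal opposite sides, equal opposite angles, zero offsets, constant Bennett ratio — which is exactly what \cite[Section~10.3]{odehnal20} predicts and which serves as an independent sanity check before asserting the analogous statement for the full 8R loop. Once the bookkeeping is pinned down, the remaining algebra is a routine simplification that the computer algebra system handles, so the theorem follows.
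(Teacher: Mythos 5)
Your proposal matches the paper's proof: the paper likewise computes the Denavit--Hartenberg parameters directly from the explicit Plücker coordinates \eqref{eq:21} and \eqref{eq:22} in the zero configuration, using the formulas of \cite[Section~2.1.2]{pottmann10} or \cite{faria19}, and verifies the claimed relations by computer algebra. Your additional observation that the offsets vanish because all common-normal feet coincide on the first coordinate axis in the aligned zero configuration is a valid conceptual shortcut, but the overall route is the same.
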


Remarkably, the four distances are rational expressions in the input
parameters, no square roots appear:
\begin{equation*}
  \begin{aligned}
    d_1 &= \frac{1}{\Phi}
    (h_1(m_2n_3-m_3n_2)(2h_2^2m_2n_3-h_2^2m_3n_2+h_2h_3m_2n_2+h_2h_3m_3n_3+h_3^2m_2n_3)),\\
    d_2 &= \frac{1}{\Phi}
    (h_1(m_2n_3-m_3n_2)(h_2n_2-h_3n_3)(h_2m_3-h_3m_2)),\\
    d_3 &= \frac{1}{\Phi}
    (h_1(m_2n_2+m_3n_3)(h_2n_3+h_3n_2)(h_2m_3-h_3m_2)),\\
    d_4 &= \frac{1}{\Phi}
    (h_1(h_2n_3+h_3n_2)(2h_2m_2^2n_3-h_2m_2m_3n_2+h_2m_3^2n_3+h_3m_2^2n_2+h_3m_2m_3n_3)),
  \end{aligned}
\end{equation*}
where
\begin{equation*}
  \Phi = h_2m_3((h_3m_2-h_2m_3)(n_2^2-n_3^2) + 2n_2n_3(h_2m_2+h_3m_3)).
\end{equation*}

The squared cosines of the corresponding angles are
\begin{equation*}
  \begin{aligned}
    \cos^2\alpha_1 &= \frac{(m_2n_2+m_3n_3)^2}{(n_2^2+n_3^2)(m_2^2+m_3^2)},\\
    \cos^2\alpha_2 &= \frac{(2h_2m_2^2n_3-h_2m_2m_3n_2+h_2m_3^2n_3+h_3m_2^2n_2+h_3m_2m_3n_3)^2}
                           {(m_2^2+m_3^2)\Psi}\\
    \cos^2\alpha_3 &= \frac{(2h_2^2m_2n_3-h_2^2m_3n_2+h_2h_3m_2n_2+h_2h_3m_3n_3+h_3^2m_2n_3)^2}
                           {(h_2^2+h_3^2)\Psi},\\
    \cos^2\alpha_4 &= \frac{(h_2n_2-h_3n_3)^2}{(h_2^2+h_3^2)(n_2^2+n_3^2)}
  \end{aligned}
\end{equation*}
where
\begin{multline*}
  \Psi = 4h_2^2m_2n_3(m_2n_3-m_3n_2) + (h_2^2m_3^2 + h_3^2m_2^2)(n_2^2+n_3^2)\\
  + 2h_2h_3m_2(2m_2n_2n_3-m_3(n_2^2-n_3^2)).
\end{multline*}

We conjecture that the necessary conditions of Theorem~\ref{th:DH-relations} on
the mechanism's Denavit-Hartenberg parameters are not sufficient to characterize
a multi-Bennett 8R mechanism.

\subsection{Bennett Sub-Mechanisms}
\label{sec:Bennett-sub-mechanisms}

We have already mentioned that for fixed $s = s_0$ the axes $H(s,t)$, $M(s,t)$,
$H'(s,t)$, and $M'(s,t)$ to the respective factors $t-h$, $t-m$, $t-h'$, and
$t-m'$ form a Bennett mechanism. The same is true for fixed $t = t_0$ and the
axes $L(s,t)$, $N(s,t)$, $L'(s,t)$, $N'(s,t)$ to the respective factors
$s-\ell$, $s-n$, $s-\ell'$, $s-n'$. We refer to the respective Bennett
mechanisms as $t$-Bennett mechanism at $s_0$ and as $s$-Bennett mechanism at
$t_0$. The $t$-Bennett mechanism aligns for precisely two parameter values $t'$,
$t''$. By means of \eqref{eq:20} it can readily be verified that aligning of
$t$-Bennett linkage happens at
\begin{equation}
  \label{eq:23}
  t' = \infty, \quad t'' = \mu
\end{equation}
while an $s$-Bennett linkage aligns at
\begin{equation}
  \label{eq:24}
  s' = \infty, \quad s'' = n_0.
\end{equation}
The most remarkable thing about Equations~\eqref{eq:23} and \eqref{eq:24} is
that that $t'$ and $t''$ do \emph{not depend} on $s$ and $s'$, $s''$ do
\emph{not depend} on $t$. Abstracting from our special geometric description to
the general case, we can thus state:

\begin{thm}
  \label{th:folded-Bennett}
  In a multi-Bennett 8R mechanism, the $t$-Bennett sub-mechanisms align precisely
  for two fixed parameter values $t'$, $t''$ and the $s$-Bennett sub-mechanisms
  align precisely for two fixed parameter values $s'$, $s''$. The points $(t',
  s')$, $(t', s'')$, $(t'', s')$, and $(t'', s'')$ in the configuration space
  correspond to the four aligned states of the complete mechanism, c.f.
  Theorem~\ref{th:folded}.
\end{thm}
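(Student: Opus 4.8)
The plan is to reduce the general statement of Theorem~\ref{th:folded-Bennett} to the concrete coordinate computations already set up in Section~\ref{sec:DH-parameters}. Recall that the choice of coordinate frames and the normalization $n_1 = 0$ were explicitly declared to be achievable via re-parametrizations of type \eqref{eq:8} and rigid coordinate changes, hence no loss of generality. Therefore it suffices to establish \eqref{eq:23} and \eqref{eq:24} in those special coordinates and then note that the \emph{qualitative} assertion — that the aligning parameter values for the $t$-Bennett sub-mechanism are independent of $s$, and vice versa — is a coordinate-free statement preserved under the admissible transformations. So the first step is to make precise what ``the $t$-Bennett sub-mechanism at $s_0$ aligns'' means in terms of \eqref{eq:20}: the four axes $H(s_0,t)$, $M(s_0,t)$, $H'(s_0,t)$, $M'(s_0,t)$ share a common perpendicular. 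Since $H$ and $N'$ are constant and $H(s,t) = \Vect(h)$, $M'(s,t)$ depends only on $s$, the cleanest characterization is via the Bennett-isogram picture recalled right before Section~\ref{sec:DH-parameters}: an aligned configuration is one in which the four common-normal feet $F_h, F_m, F_{h'}, F_{m'}$ become collinear, equivalently the spatial quadrilateral degenerates.

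The second step is the actual verification of \eqref{eq:23}. Using the explicit dual-quaternion data \eqref{eq:21}, \eqref{eq:22} for $h$, $m$, $h'$, $m'$ (which involve only the parameters $h_1,h_2,h_3,m_2,m_3,\mu,\nu$ and are independent of $s$ entirely), one computes the positions of the four moving axes $H(s_0,t), M(s_0,t), H'(s_0,t), M'(s_0,t)$ under \eqref{eq:20} — but since these four factors are all $t$-factors, their images under \eqref{eq:20} genuinely do not involve $s_0$ at all; this is the key structural observation. Hence the alignment condition is a condition on $t$ alone. Evaluating at the zero configuration $t = \infty$ gives one aligned state by construction (we arranged the $t$-Bennett linkage to be aligned there). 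A direct computer-algebra computation of the second root of the alignment condition — most efficiently phrased as the vanishing of the scalar triple product of the three difference vectors of the common-normal feet, or as a rank-drop condition on the $4 \times 4$ matrix of Plücker coordinates — should yield the clean value $t'' = \mu$. The analogous computation for the $s$-Bennett linkage, using $\ell, n, \ell', n'$ from \eqref{eq:22} and the solved dual parts, produces $s' = \infty$, $s'' = n_0$.

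The third step is to assemble the conclusions. From \eqref{eq:23}--\eqref{eq:24} it is immediate that the only configurations in which \emph{both} sub-mechanisms align simultaneously are the four pairs $(t',s'), (t',s''), (t'',s'), (t'',s'')$; and these are precisely the four configurations of Theorem~\ref{th:folded}, since we already showed there that each aligned configuration of the whole 8R linkage is one where all eight axes share a common perpendicular — which forces both the $t$-Bennett and the $s$-Bennett quadruples to be aligned. Conversely, simultaneous alignment of the two quadruples at their common perpendicular is exactly alignment of the full mechanism. A brief remark that ``precisely two'' follows from the classical fact (cited via \cite[Section~10.3]{odehnal20}) that a generic Bennett linkage has exactly two aligned configurations closes the argument.

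The main obstacle I expect is bookkeeping rather than conceptual: carrying the motion formula \eqref{eq:20} through the $\eps$-conjugations and the dual-quaternion products, and then extracting from the resulting Plücker coordinates a tractable algebraic alignment condition whose second root simplifies to $\mu$ (resp.\ $n_0$). One has to be careful that the alignment condition is stated in a way that is manifestly a polynomial in $t$ (resp.\ $s$) of degree two after clearing denominators, so that ``precisely two'' is literally visible; the scalar-triple-product formulation handles this. A secondary subtlety is the justification that the qualitative independence claim transfers back from the special coordinates to the general mechanism — this is routine but worth a sentence, since re-parametrizations of type \eqref{eq:8} act on $t$ and on $s$ separately and by Möbius transformations, so ``the aligning $t$-values do not depend on $s$'' is invariant under them.
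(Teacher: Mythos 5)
Your overall strategy (work in the special coordinates of Section~\ref{sec:DH-parameters}, compute the alignment condition of each Bennett sub-mechanism, read off the two roots, and combine with Theorem~\ref{th:folded}) is the same as the paper's, which simply verifies \eqref{eq:23} and \eqref{eq:24} by computer algebra from \eqref{eq:20}. However, your ``key structural observation'' is false, and it is load-bearing. You claim that because $t-h$, $t-m$, $t-h'$, $t-m'$ are all $t$-factors, the axes $H(s_0,t)$, $M(s_0,t)$, $H'(s_0,t)$, $M'(s_0,t)$ ``genuinely do not involve $s_0$ at all,'' so that the alignment condition is a condition on $t$ alone. Look at \eqref{eq:20}: $M(s,t)$ is $\Vect(m)$ conjugated by $(t-h)(s-\ell)$, $M'(s,t)$ is $\Vect(m')$ conjugated by $(s-n')$ (the paper explicitly notes it ``depends only on $s$''), and $H'(s,t)$ carries three $s$-factors. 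The $t$-Bennett sub-mechanism at $s_0$ is a genuinely $s_0$-dependent family of Bennett linkages --- Theorem~\ref{th:bennett-ratio} even records that its Bennett ratio varies with $s$ as a degree-four rational function, so the sub-mechanisms for different $s_0$ are not even congruent.

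The independence of $t'$, $t''$ from $s_0$ is therefore not a formal consequence of the factor pattern; it is exactly the ``most remarkable'' nontrivial content of the theorem, and it can only be obtained by carrying the full $s_0$-dependence through the computation and observing that it cancels out of the roots of the alignment condition. As written, your step 2 would either (a) compute with the wrong, $s_0$-free axes and get a vacuously $s$-independent answer, or (b) compute correctly and then have no argument for why the resulting condition in $t$ has $s_0$-free roots other than inspecting the output --- which is fine, but then you must say so rather than appeal to the false structural claim. Two smaller points: alignment at $t=\infty$ is only ``by construction'' for $s_0=\infty$ (for general $s_0$ it is part of what must be verified), and a vanishing scalar triple product of the feet's difference vectors tests coplanarity, not collinearity, of the four common-normal feet, so your proposed algebraic alignment test needs adjusting.
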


Theorem~\ref{th:folded-Bennett} is illustrated in Figure~\ref{fig:8R}. There,
the eight links are visualized by cylinders around the common normals of
consecutive joint axes. This is clearly visible in the four totally aligned
configurations in the corners. The motions between neighbouring corners have $t
= t'$, $t = t''$, $s = s'$, or $s = s''$. Figure~\ref{fig:8R} also illustrates
the multi-Bennett's configuration space, a torus, and the four curves, meridian
and lateral circles on the torus, along which Bennett sub-mechanisms align.

\newcommand{\myR}[2][scale=0.075]{\includegraphics[#1]{./img/8R-#2.png}}
\begin{figure}
  \centering
  \begin{tblr}{colspec={X[c]X[c]X[c]X[c]},
      colsep=1mm,
    }
    \myR{01} & \myR{02}                                             & \myR{03} & \myR{04} \\
    \myR{15} & \SetCell[r=2,c=2]{c} \includegraphics[]{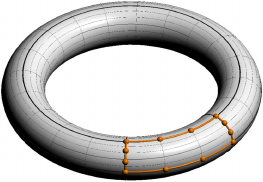} &          & \myR{06}  \\
    \myR{14} &                                                      &          & \myR{07}  \\
    \myR{12} & \myR{11}                                             & \myR{10} & \myR{08}
  \end{tblr}
  \caption{Configuration space of multi-Bennett 8R mechanism and animated
    transitions between four totally aligned configurations in the corners;
    points correspond to depicted configurations.}
  \label{fig:8R}
\end{figure}

As expected, the Bennett ratio is not
constant within the family of $t$-Bennett mechanism but depends on $s$ (and vice
versa for $s$-Bennett mechanisms). However, a noteworthy property is:

\begin{thm}
  \label{th:bennett-ratio}
  The Bennett ratio within the family of $t$-Bennett linkages is a
  \emph{rational} function of degree four in $s$ and vice versa for the
  $s$-Bennett linkages.
\end{thm}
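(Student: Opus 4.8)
The plan is to extract the Bennett ratio from the explicit coordinate description of the $t$-Bennett linkage and track its dependence on $s$. Recall from Section~\ref{sec:DH-parameters} that for fixed $s = s_0$ the axes $H(s_0,t)$, $M(s_0,t)$, $H'(s_0,t)$, $M'(s_0,t)$ form a Bennett linkage. Its Bennett ratio is $\sin\alpha / d$, where $\alpha$ is the angle and $d$ the distance between any pair of consecutive axes -- by the isogram property this is independent of which pair is chosen, so I may compute it from the pair $H$ and $M$. Using \eqref{eq:20}, the axis $H(s_0,t) = \Vect(h)$ is constant and $M(s_0,t)$ is obtained by displacing $\Vect(m)$ with the dual quaternion $(s_0-\ell)\Vect(m)(s_0-\cj{\ell})$ (the $t$-dependent part of the displacement in \eqref{eq:20} acts trivially on the Bennett ratio since a rigid motion preserves angles and distances, and the two axes involved, $H$ and $M$, are both carried by the same $t$-rotation built from $t-h$). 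So effectively I am looking at the axis pair $\Vect(h)$ and $\ej{\bigl((s_0-\ell)\Vect(m)(s_0-\cj{\ell})\bigr)}$, which depends on $s_0$ only through the linear-in-$s_0$ dual quaternion $s_0 - \ell$.

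The key steps, in order: (1) Write the displaced axis $\widetilde{M}(s) \coloneqq \ej{\bigl((s-\ell)\Vect(m)(s-\cj{\ell})\bigr)}$ explicitly; since $s-\ell$ has degree one in $s$ and the sandwich $(s-\ell)(\,\cdot\,)(s-\cj{\ell})$ is quadratic, $\widetilde{M}(s)$ is a vector-valued dual quaternion whose coordinates are polynomials of degree two in $s$, and whose Plücker norm $\widetilde{M}\cj{\widetilde{M}}$ equals $\No{(s-\ell)}\cdot\Vect(m)\cj{\Vect(m)}$, which is real and degree two in $s$. (2) Compute the distance $d(s)$ and angle $\alpha(s)$ between the fixed line $\Vect(h)$ and the line $\widetilde{M}(s)$ via the standard dual-number formula: writing $\Vect(h) = \mathbf{p} + \eps \mathbf{q}$ and $\widetilde{M}(s) = \mathbf{p}(s) + \eps\mathbf{q}(s)$ (after normalizing so the primal parts are unit vectors, or keeping track of the norms explicitly), one has $\cos\alpha + \eps(-d\sin\alpha)$ equal to the dual inner product $-\Scal\bigl(\Vect(h)\,\widetilde{M}(s)\bigr)$ divided by the product of the (dual) norms. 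The key quantity for the Bennett ratio is $\sin\alpha/d$; a clean route is to use that $d^2\sin^2\alpha$ is the squared moment and $\sin^2\alpha = 1-\cos^2\alpha$, so $(\sin\alpha/d)^2 = \sin^4\alpha/(d^2\sin^2\alpha)$, both numerator and denominator being rational in $s$ once the coordinates of $\widetilde{M}(s)$ are plugged in. (3) Do the degree bookkeeping: the primal parts of $\Vect(h)$ and $\widetilde{M}(s)$ pair to give $\cos\alpha \cdot \|\mathbf p(s)\|$ a polynomial of degree two in $s$, while $\|\mathbf p(s)\|^2 = \No{(s-\ell_p)}\,\Vect(m_p)\cj{\Vect(m_p)}$ is degree two; the $\eps$-part contributing to $d\sin\alpha$ involves the full quadratic $\widetilde{M}(s)$ and is likewise controlled. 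Collecting terms, $(\sin\alpha/d)^2$ comes out as a ratio of polynomials in $s$ of degree at most eight over degree at most eight, but after cancellations the Bennett ratio itself $\sin\alpha/d$ (not its square) is rational of degree exactly four -- the square-root-free-ness is exactly the phenomenon already observed for the distances $d_i$ in Theorem~\ref{th:DH-relations}, and the same structural cancellation (the discriminant being a perfect square, as in the aligned-configuration analysis) is what collapses the degree. (4) Invoke that none of the coordinate choices in Section~\ref{sec:DH-parameters} was a loss of generality, so the degree-four conclusion holds in general; the symmetric argument with $s$ and $t$ interchanged handles the $s$-Bennett family.

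The main obstacle will be step (3): establishing that the naive degree count (which would give degree eight, or a ratio involving a square root of a degree-four discriminant) actually collapses to a genuine rational function of degree four. Concretely, one must show that the expression $\sin\alpha/d$ -- a priori of the form $(\text{polynomial of degree} \le 4)/\sqrt{\text{polynomial of degree} \le 4}$, since $\sin\alpha$ involves $\sqrt{1-\cos^2\alpha}$ and $d\sin\alpha$ is polynomial while $d$ alone is not obviously rational -- has the surd cancel. I expect this to follow from the same special structure exploited in Theorem~\ref{th:DH-relations}: the relevant Gram-type determinant built from $\Vect(h)$ and $\widetilde{M}(s)$ factors with a perfect-square factor in $s$, precisely because the $t$-Bennett linkage degenerates (aligns) at the two fixed values $t'$, $t''$ of \eqref{eq:23} independently of $s$, forcing a rigidity in how $\alpha(s)$ and $d(s)$ co-vary. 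Verifying this factorization is a finite computer-algebra computation on the explicit formulas \eqref{eq:21}, \eqref{eq:22} for the axes, entirely analogous to the verification already announced for Theorems~\ref{th:DH-relations} and \ref{th:folded-Bennett}; once the perfect-square factor is exhibited and cancelled, reading off the degree-four numerator and denominator of the surviving rational function is routine.
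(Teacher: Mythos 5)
Your proposal is correct and in substance coincides with the paper's proof, which consists of precisely the computer-algebra computation you describe: the authors compute the $t$-Bennett ratio from the explicit axis coordinates of Section~\ref{sec:DH-parameters} and simply display the resulting rational function of degree four in $s$. What you add is the structural reduction that makes the computation transparent: since $H(s,t)=\Vect(h)$ and $M(s,t)$ are images of $\Vect(h)$ and of $\widetilde M(s)\coloneqq\ej{\bigl((s-\ell)\Vect(m)(s-\cj{\ell})\bigr)}$ under the same rigid displacement generated by $t-h$, the ratio indeed depends on $s$ alone and can be read off from this single pair of lines. One correction and one simplification. First, the Plücker norm of $\widetilde M(s)$ is $\No{(s-\ell)}^2\,\No{\Vect(m)}$, not $\No{(s-\ell)}\,\No{\Vect(m)}$ (the norm is multiplicative and $\No{(s-\cj{\ell})}=\No{(s-\ell)}$). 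Second, this is exactly why the obstacle you flag in step (3) dissolves: the primal norm of $\widetilde M(s)$ is a perfect square of a quadratic in $s$ times a constant, so $\cos\alpha(s)$ and $d(s)\sin\alpha(s)$ are already rational in $s$ (degree $2/2$ each), and the identity $\sin\alpha/d=\sin^2\alpha/(d\sin\alpha)=(1-\cos^2\alpha)/(d\sin\alpha)$ exhibits the Bennett ratio as manifestly rational --- no perfect-square-discriminant miracle analogous to the rationality of the individual distances $d_i$ is needed. The degree count then gives a ratio of degree at most $4+2$ over $4+2$ with an evident common quadratic factor $\No{(s-\ell_p)}$ to cancel, landing on degree four over degree four as in the paper's displayed formula; the $s$-Bennett case follows by symmetry.
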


\begin{proof}
  A direct computation using computer algebra yields the value
  \begin{equation*}
    \tau = \frac{h_2m_2((h_2m_3+h_3m_2)^2(s^4 - 4n_0s^3) + c_2s^2 + c_1s + c_0)}{h_1\sqrt{h_2^2+h_3^2}\sqrt{m_2^2+m_3^2}(s^2 - 2n_0s + n_0^2 + n_2^2 + n_3^2)((h_3^2m_2^2-h_2^2m_3^2)s(s -2n_0) + d_0)}
  \end{equation*}
  for the $t$-Bennett ratio where
  \begin{multline*}
    c_2 = 6(h_2m_3+h_3m_2)^2n_0^2 + (2h_2^2m_3^2+2h_3^2m_2^2+4h_3^2m_3^2)n_2^2 \\
    + 2(2h_2^2m_2^2+h_2^2m_3^2+h_3^2m_2^2)n_3^2 - 4n_2n_3(h_2m_2-h_3m_3)(h_2m_3-h_3m_2),
  \end{multline*}
  \begin{multline*}
    c_1 = -4n_0^3(h_2m_3+h_3m_2)^2 - 4n_0n_2^2(h_2^2m_3^2+h_3^2m_2^2+2h_3^2m_3^2) \\
    + 8n_0n_2n_3(h_2m_2-h_3m_3)(h_2m_3-h_3m_2) - 4n_0n_3^2(2h_2^2m_2^2+h_2^2m_3^2+h_3^2m_2^2),
  \end{multline*}
  \begin{multline*}
    c_0 = n_0^4(h_2m_3+h_3m_2)^2
       + 2n_0^2n_2^2(h_2^2m_3^2+h_3^2m_2^2+2h_3^2m_3^2)\\
       + 4n_0^2n_2n_3(h_2m_2-h_3m_3)(h_3m_2-h_2m_3)
       + 2n_0^2n_3^2(2h_2^2m_2^2+h_2^2m_3^2+h_3^2m_2^2)
       + n_2^4(h_2m_3-h_3m_2)^2 \\
       + 4n_2^3n_3(h_2m_2+h_3m_3)(h_3m_2-h_2m_3)
       + 2n_2n_3^2(2h_2^2m_2^2-h_2^2m_3^2+6h_2h_3m_2m_3-h_3^2m_2^2+2h_3^2m_3^2)\\
       + 4n_2n_3^3(h_2m_2+h_3m_3)(h_2m_3-h_3m_2) + n_3^4(h_2m_3-h_3m_2)^2,
  \end{multline*}
  and
  \begin{multline*}
    d_0 = - 4h_2n_2n_3(2h_2m_2m_3-h_3m_2^2+h_3m_3^2) + n_0^2(h_3^2m_2^2-h_2^2m_3^2) \\
       + n_2^2(3h_2m_3-h_3m_2)(h_2m_3-h_3m_2) + n_3^2(4h_2^2m_2^2-h_2^2m_3^2+4h_2h_3m_2m_3+h_3^2m_2^2).
  \end{multline*}
  A similar formula can be derived for the $s$-Bennett ratio.
\end{proof}

\section{Conclusion and Future Research}
\label{sec:conclusion}

We presented the first example of a mechanism constructed from the factorization
of \emph{bivariate} motion polynomials and described some of its fundamental properties. Of course,
open questions remain.

The simple conditions on the mechanism's DH parameters which we describe in
Theorem~\ref{th:DH-relations} are necessary but, so we believe, not sufficient.
It would be desirable to augment them with further conditions to obtain a set of
sufficient conditions.

We further believe that the configuration space parametrized by the underlying
motion polynomial $C(s,t)$ is only a part of the mechanism's complete
configuration space. Obtaining a clearer picture of possible assembly modes or
bifurcations of the motion is certainly a worthy topic of future research.

The configuration space component described by $C(s,t)$ has many attractive
features for potential applications: It has a rational parametrization with low
degree parameter lines. The motion along a parameter line is the well-understood
coupler motion of a Bennett mechanism. Simple parametrization but also the
unusual separation into joints that only move with parameter $t$ and joints that
only move with parameter $s$ is expected to be beneficial for the control of a
multi-Bennett 8R mechanism. It can also be viewed as an adjustable Bennett
mechanism where rotation in one group of joints (say those parametrized by $s$)
changes the geometry of the $t$-Bennett mechanism.

\bibliographystyle{elsarticle-harv}

\end{document}